\pdfoutput=1
\documentclass[conference]{IEEEtran}
\IEEEoverridecommandlockouts

\usepackage{cite}
\usepackage{amsmath, amsthm,amssymb,amsfonts}

\usepackage{graphicx}
\usepackage{textcomp}
\usepackage{xcolor}
\usepackage{bm}

\usepackage{hyperref}

\usepackage{subcaption}
\usepackage[ruled,linesnumbered,vlined]{algorithm2e}
\SetAlFnt{\small\sffamily}
\usepackage{multirow}
\usepackage{threeparttable}

\theoremstyle{definition}

\newtheorem{definition}{Definition}
\newtheorem{theorem}{Theorem}

\def\BibTeX{{\rm B\kern-.05em{\sc i\kern-.025em b}\kern-.08em
    T\kern-.1667em\lower.7ex\hbox{E}\kern-.125emX}}
\begin{document}

\title{SPAP: Simultaneous Demand Prediction and Planning for Electric Vehicle Chargers \\in a New City}

\author{
\IEEEauthorblockN{Yizong Wang\IEEEauthorrefmark{1},
Dong Zhao\IEEEauthorrefmark{1},
Yajie Ren\IEEEauthorrefmark{1},
Desheng Zhang\IEEEauthorrefmark{2}, and
Huadong Ma\IEEEauthorrefmark{1}}
\IEEEauthorblockA{\IEEEauthorrefmark{1}Beijing Key Laboratory of Intelligent Telecommunication Software and Multimedia,\\
Beijing University of Posts and Telecommunications, Beijing, China}
\IEEEauthorblockA{\IEEEauthorrefmark{2}Rutgers University, USA}
\IEEEauthorblockA{\{wangyizong, dzhao, renyj\}@bupt.edu.cn, desheng.zhang@cs.rutgers.edu, mhd@bupt.edu.cn}
}

\maketitle
\begin{abstract}
	For a new city that is committed to promoting Electric Vehicles (EVs), it is significant to plan the public charging infrastructure where charging demands are high. However, it is difficult to predict charging demands before the actual deployment of EV chargers for lack of operational data, resulting in a deadlock. A direct idea is to leverage the urban transfer learning paradigm to learn the knowledge from a source city, then exploit it to predict charging demands, and meanwhile determine locations and amounts of slow/fast chargers for charging stations in the target city.
	However, the demand prediction and charger planning depend on each other, and it is required to re-train the prediction model to eliminate the negative transfer between cities for each varied charger plan, leading to the unacceptable time complexity.
	To this end, we design a concept and an effective solution of \underline{S}imultaneous Demand \underline{P}rediction \underline{A}nd \underline{P}lanning (\textit{SPAP}): discriminative features are extracted from multi-source data, and fed into an Attention-based Spatial-Temporal City Domain Adaptation Network (\textit{AST-CDAN}) for cross-city demand prediction; a novel Transfer Iterative Optimization (\textit{TIO}) algorithm is designed for charger planning by iteratively utilizing \textit{AST-CDAN} and a charger plan fine-tuning algorithm. Extensive experiments on real-world datasets collected from three cities in China validate the effectiveness and efficiency of \textit{SPAP}. Specially, \textit{SPAP} improves at most 72.5\% revenue compared with the real-world charger deployment.
\end{abstract}

\begin{IEEEkeywords}
Urban Transfer Learning, Demand Prediction, Infrastructure Planning, Electric Vehicles
\end{IEEEkeywords}

\section{Introduction}
\label{sec:1introduction}
Electric Vehicles (EVs) are a key technology to reduce air pollution and greenhouse gas emissions. 
The global EV fleet has expanded significantly over the last decade, 
underpinned by supportive policies and technology advances. Only about 17,000 EVs were on the world’s roads in 2010, while the number had swelled to 10 million by 2020; 
meanwhile, the number of publicly accessible chargers increased by 60\% and 46\% in 2019 and 2020 respectively compared with the previous year \cite{EVoutlook20, EVoutlook21}.
Publicly accessible chargers are indispensable where home and workplace charging are unavailable or insufficient to meet needs (e.g., for long-distance travel) \cite{EVoutlook21}.
For a new city that is committed to promoting EVs, the primary task is to build a network of public EV charging stations from scratch for eliminating various concerns (e.g., charger unavailability, range anxiety) of potential EV buyers.
Moreover, given the high investment cost, charging station operators have concerns about the revenue and payback period.
It is reported that the payback period would fall by 2 years for 1.9\% increase in charger utilization ratio \cite{CPreport}.
Accordingly, charging station operators would only want to deploy charging infrastructure where charging demands are high \cite{gopalakrishnan2016demand}.

However, it is challenging to predict charging demands before the actual deployment of EV chargers for lack of operational data in a new city, resulting in a deadlock.
To address this issue, a traditional way is to infer charging demands by leveraging \textit{implicit} information such as parking demands \cite{chen2013locating} and population distribution \cite{xiong2017optimal}.
Unfortunately, such \textit{indirect} method is error-prone particularly when the EV market share is still small \cite{chen2013locating}, as the implicit data have different distributions with charging demands in nature (detailed in Sect. \ref{evaluation:planning}).
Recently, the advanced data acquisition technologies enable us to collect \textit{explicit} data about charging events of EVs, which helps to charger planning \cite{du2018demand, gopalakrishnan2016demand, li2015growing, luo2020d3p}.
However, some popular data sources, such as taxi/bus trajectories \cite{li2015growing, wang2018bcharge} and renting/returning records from electric car-sharing platforms \cite{du2018demand, luo2020d3p}, are only limited to commercial EVs rather than private EVs. For the general charging stations except for those that are used exclusively for commercial EVs, the only available \textit{explicit} data are their charger transaction records \cite{gopalakrishnan2016demand}, whereas it is impossible in a new city.

To address the data scarcity issue, a direct thought is to leverage the emerging urban transfer learning paradigm, which has been successfully applied for various smart city cold-start problems \cite{wang2018smart} such as crowd flow prediction \cite{DBLP:conf/ijcai/WangGMLY19}, human mobility data generation \cite{he2020human}, chain store site recommendation \cite{guo2018citytransfer}, POI recommendations \cite{ding2019learning}, and parking hotspots detection \cite{liu2018will}. More specifically, \textit{considering the similarity and difference of two cities via commonly available datasets, such as map, POI, traffic, etc., can we learn the knowledge on charging demands from a charging station network that is already deployed in other cites, and further exploit it to predict charging demands, and meanwhile determine proper locations and amount of chargers for charging stations in a new city?} However, it is still a non-trivial task, as the existing studies either still need a small amount of \textit{explicit} data in the target city \cite{DBLP:conf/ijcai/WangGMLY19, ding2019learning}, or are very different from our problem settings \cite{ding2019learning, guo2018citytransfer, he2020human,liu2018will}. By contrast, this work does not rely on any \textit{explicit} data in the new city. 

More specifically, we face a great challenge: \textit{the charger demand distribution varies with city-wide charger plans, and in turn, charger planning is dependent on the charging demand prediction, resulting in a deadlock.} To effectively predict charging demands, it is necessary to capture complex spatial-temporal dependencies, affected by various profile factors (numbers of slow/fast chargers in a station and also its nearby stations) and context factors (POIs, road networks, transportation). Furthermore, the data-driven prediction model trained on one city may not be well adapted to another city due to the dissimilar nature (e.g., city scale, development level and strategy) of different cities, which is also known as the domain shift problem, resulting in the negative transfer \cite{5288526}. Even if we have an effective model to predict the charging demands, it is still required to re-train the model to eliminate the negative transfer for each varied charging plan, leading to the unacceptable time complexity.

To this end, we design a novel algorithm named \textit{Transfer Iterative Optimization (TIO)} for simultaneous demand prediction and planning for EV chargers in the target city, by iteratively utilizing an \textit{Attention-based Spatial-Temporal City Domain Adaptation Network (AST-CDAN)} for charger demand prediction and a charger plan fine-tuning algorithm based on the dynamic programming. 
More specifically, we extract discriminative profile and context features from the multi-source data. The \textit{AST-CDAN} is designed for transferring the knowledge on charging demands from the source city to the target city without EV charging stations, which
consists of four components: a \textit{ProfileNet} and a \textit{ContextNet} that learn latent profile and context features from the raw extracted features respectively, a \textit{DemandNet} that predicts the charging demands over different time intervals of one day, and a \textit{DomainNet} that promotes the features from \textit{ProfileNet} and \textit{ContextNet} to deeper domain-invariant representations. The collaboration of the four components effectively address the domain shift problem between cites.
In summary, our main contributions are as follows:
\begin{itemize}
	\item To the best of our knowledge, we are the first to present the concept and solution of \textit{\underline{S}imultaneous Demand \underline{P}rediction \underline{A}nd \underline{P}lanning (SPAP)} for EV chargers in a new city. It is different from the existing work, which conducts charger planning based on charging demands that are assumed to be known in advance or could be independently inferred. We prove the NP-hardness of the problem and the unacceptable time complexity of a straightforward approach (Sect. \ref{sec:2overview}).
	\item We propose a novel \textit{TIO} algorithm for simultaneous demand prediction and planning with a time complexity of $O(|\textit{C}_{TC}|B^2)$, where $B$ is the budget and $|\textit{C}_{TC}|$ is the number of charging stations in the target city (Sect. \ref{sec:4planning}). 
	A novel model \textit{AST-CDAN} is proposed to accurately predict charger demands in the target city, based on the extracted profile and context features. (Sect. \ref{sec:3prediction}).
	\item Extensive experiments on real datasets collected from three cities in China demonstrate the advantages of \textit{SPAP} over competitive baselines. Moreover, \textit{SPAP} improves at most 72.5\% revenue compared with the real-world charger deployment (Sect. \ref{sec:5evaluation}). We have released the code and data for public use\footnote{\href{https://github.com/easysam/SPAP}{https://github.com/easysam/SPAP}}.
\end{itemize}
\section{Overview}\label{sec:2overview}
This section formally defines the \textit{Simultaneous Demand Prediction and Planning} problem, and proves its NP-hardness and the unacceptable time complexity of a straightforward approach. Then we outline our \textit{SPAP} solution framework.

\subsection{Problem Formulation}
\label{subsec:problem}

\begin{definition}[Charging Station]\label{def:charging station}
	A charging station is represented by a tuple $c_i = (l_i, n_i^S, n_i^F, e_i^S, e_i^F, \bm{p}_i^S, \bm{p}_i^F, \bm{y}_i^S, \bm{y}_i^F)$, consisting of the following nine elements:

	\begin{itemize}
		\item $l_i$, the physical location of $c_i$;
		\item $n_i^S$ and $n_i^F$, \# of slow/fast chargers deployed in $c_i$;
		\item $e_i^S$ and $e_i^F$, the unit costs for deploying any one slow/fast charger in $c_i$;
		\item $\bm{p}_i^S=[p_{i1}^S, p_{i2}^S, \ldots, p_{iT}^S]$ and $\bm{p}_i^F=[p_{i1}^F, p_{i2}^F, \ldots, p_{iT}^F]$, the unit service price vectors of each slow/fast charger over $T$ time intervals of one day;
		\item $\bm{y}_{i}^S=[y_{i1}^S, y_{i2}^S, \ldots, y_{iT}^S]$ and $\bm{y}_{i}^F=[y_{i1}^F, y_{i2}^F, \ldots, y_{iT}^F]$, the charging demand vectors of slow/fast chargers over $T$ time intervals of one day, where $y_{it}^S$ and $y_{it}^F$ are defined as the utilization rates of each slow/fast charger during the $t$-th time interval.
	\end{itemize}
\end{definition}

We consider two cities: source and target cities with deployed charging station set $C_{SC}$ and candidate charging station set $C_{TC}$, respectively.
For each deployed charging station $c_i\in C_{SC}$, all of its elements are known;
whereas for each candidate charging station $c_i\in C_{TC}$, only a part of its elements, $(l_i, e_i^S, e_i^F, \bm{p}_i^S, \bm{p}_i^F)$, are known\footnote{The service price has little room for choice due to the operating cost and the business competition, so it is easy to be determined.}.
We require to make a plan for deploying proper numbers of slow/fast chargers in each candidate station of the target city, defined as follows:
\begin{definition}[EV Charger Plan]
	\label{def: charger plan}
	Given a set of candidate stations $C_{TC}$ in the target city, an EV charger plan is a set $\mathcal{N}_{TC}=\{(n_i^S, n_i^F)\mid c_i\in C_{TC}, n_i^S\in \mathbb{N}, n_i^F\in \mathbb{N}\}$. 
	Note that, it is possible that we do not deploy any charger for one candidate station $c_j\in C_{TC}$, i.e., $n_j^S=n_j^F=0$. 
	For convenience, let $\mathcal{N}_{SC}$ denote the charger plan that has been deployed in the source city.
\end{definition}

\label{subsec:definition}
\begin{definition}[\textbf{C}harging \textbf{D}emand \textbf{P}rediction in the	\textbf{T}arget City (CDPT)]
	\label{def:prediction}
	Given the deployed charger plan $\mathcal{N}_{SC}$ in the source city, a specific charger plan $\mathcal{N}_{TC}$ in the target city, the multi-source context data (POIs, transportation, road networks) $\bm{D}_{SC}$ and $\bm{D}_{TC}$ in both the source and target cites, and the historical charging demand data $\bm{Y}_{SC}=\{(\bm{y}_{i}^S, \bm{y}_{i}^F)\mid c_i \in C_{SC}\}$ in the source city, the \textit{CDPT} problem is to learn a function $f$ to predict the charging demands for all the stations in the target city $\widehat{\bm{Y}}_{TC}=\{(\widehat{\bm{y}_{i}^S}, \widehat{\bm{y}_{i}^F})\mid c_i \in C_{TC}\}$:
	\begin{equation}
		\begin{aligned}
		\min_f \quad & error(\widehat{\bm{Y}}_{TC}, \bm{Y}_{TC}) \\
		 \textrm{s.t.} \quad & \widehat{\bm{Y}}_{TC}\!=\!f(\mathcal{N}_{SC}, \mathcal{N}_{TC}, \bm{D}_{SC}, \bm{D}_{TC}, \bm{Y}_{SC})
		\end{aligned}
	\end{equation}
\end{definition}

\begin{definition}[\textbf{C}harger \textbf{P}lanning in the \textbf{T}arget City (CPT)]
	\label{def:planning}
	Given a set of candidate stations $C_{TC}$ in the target city, the deployed charger plan $\mathcal{N}_{SC}$ and the historical charging demand data $\bm{Y}_{SC}$ in the source city, the multi-source data $\bm{D}_{SC}$ and $\bm{D}_{TC}$ in both the source and target cities, a charging demand predictor $f$ and a budget constraint $B$, the CPT problem is to find an EV charger plan $\mathcal{N}_{TC}$ in the target city such that the total revenue $R$ is maximized while the total deployment cost of chargers does not exceed $B$:
	\begin{equation}
		\begin{aligned}
			\max_{\mathcal{N_{TC}}} \quad & 
			R=\sum_{i=1}^{|C_{TC}|}\sum_{t=1}^{T}{(\widehat{y_{it}^S}  \cdot p_{it}^S \cdot n_i^S + \widehat{y_{it}^F}  \cdot p_{it}^F  \cdot  n_i^F)}\\
			\textrm{s.t.} \quad 
			& \sum_{i=1}^{|C_{TC}|}e_i^S \cdot n_i^S + e_i^F \cdot n_i^F \leq B\\
			& \widehat{\bm{Y}}_{TC}=f(\mathcal{N}_{SC}, \mathcal{N}_{TC}, \bm{D}_{SC}, \bm{D}_{TC}, \bm{Y}_{SC})\\
			& 0 \leq n_i^S \leq u^S \quad\mathrm{and}\quad 0 \leq n_i^F \leq u^F
		\end{aligned}
	\end{equation}
	Note that the charger numbers in each station are constrained by $u^S$ and $u^F$ to avoid unrealistic charger allocation.
\end{definition}

\subsection{Problem Complexity Analysis}
\label{subsec:chllanges}
In this subsection, we prove the NP-hardness of the CPT problem and analyze the time complexity of a straightforward approach.
\begin{theorem}
	The CPT problem is NP-hard.
	\label{theorem:cpt:nphard}
\end{theorem}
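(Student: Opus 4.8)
The plan is to establish NP-hardness via a polynomial-time reduction from the decision version of the classical \emph{0-1 Knapsack} problem, which is NP-complete. An instance of that problem consists of $n$ items with positive integer values $v_1,\ldots,v_n$ and weights $w_1,\ldots,w_n$, a capacity $W$, and a target $K$; it is a yes-instance iff some subset $S\subseteq\{1,\ldots,n\}$ satisfies $\sum_{j\in S} w_j \le W$ and $\sum_{j\in S} v_j \ge K$. I will map such an instance to a CPT instance in decision form (``is there a feasible charger plan with revenue $R \ge K$?'') so that the two are equivalent.

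For the construction I would let the target city have $n$ candidate stations $c_1,\ldots,c_n$, use a single time interval $T=1$, and for each $c_i$ set $e_i^S = w_i$ and $p_{i1}^S = v_i$; fast chargers are made irrelevant either by setting $u^F=0$ or by taking $e_i^F > B$ so none can be afforded. I would set the budget $B = W$ and the cap $u^S = 1$, so that each $n_i^S \in \{0,1\}$ and every feasible plan corresponds to a subset $S = \{\, i : n_i^S = 1 \,\}$. Finally I would fix the demand predictor $f$ to be the constant map returning $\widehat{y_{i1}^S}=1$ for all stations and all plans (backed, if one insists on realizability, by a trivial source-city dataset on which this predictor is consistent). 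Then $R = \sum_i \widehat{y_{i1}^S}\cdot p_{i1}^S\cdot n_i^S = \sum_{i\in S} v_i$ and the budget constraint becomes $\sum_{i\in S} w_i \le W$ --- precisely the knapsack objective and constraint. Hence a plan with $R\ge K$ exists iff the knapsack instance is a yes-instance, and since the reduction is clearly polynomial, CPT is NP-hard. (Keeping a general $u^S$ rather than $u^S=1$ would instead give a reduction from bounded knapsack, which works equally well.)

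The step that needs the most care is the role of the predictor $f$: the CPT objective couples the plan $\mathcal{N}_{TC}$ to the predicted demands $\widehat{\bm{Y}}_{TC}$ through $f$, so I must be sure the hardness is not an artifact of an unrealistically powerful or plan-dependent $f$. The clean way to handle this is to observe that CPT is already NP-hard on the sub-family of instances in which $f$ ignores the plan, which is a legitimate special case of the general problem; this should be stated explicitly so the reduction is unimpeachable. The remaining obligations are routine: checking that all constructed quantities $(e_i^S, p_{i1}^S, B, u^S, u^F)$ are admissible under Definitions \ref{def:charging station}--\ref{def:planning}, and recalling that $n_i^S$ ranges over $\mathbb{N}$ including $0$, as Definition \ref{def: charger plan} already guarantees.
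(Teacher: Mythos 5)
Your proof is correct, and it takes the same basic strategy as the paper---restrict attention to the special case in which the predicted demands do not depend on the plan $\mathcal{N}_{TC}$, observe that CPT then collapses to a knapsack-type packing problem, and reduce from an NP-hard knapsack variant---but the concrete reduction differs. The paper reduces from the \emph{unbounded} knapsack problem, taking an instance with an even number $n$ of item types and mapping each pair of items $(2i-1,2i)$ to one candidate station: weights become the slow/fast charger costs $e_i^S,e_i^F$ and values become the plan-independent daily revenues $\sum_t y_{it}^S p_{it}^S$ and $\sum_t y_{it}^F p_{it}^F$, so both charger types and arbitrary multiplicities are exercised. You instead reduce from 0-1 knapsack, using one station per item, disabling fast chargers, forcing $n_i^S\in\{0,1\}$ via $u^S=1$, and fixing a constant predictor $f$. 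Your route buys two small advantages in rigor: it is phrased as a decision-problem reduction (plan with revenue $\geq K$ iff knapsack yes-instance), whereas the paper argues informally that ``solutions'' of the two optimization instances correspond; and it treats the capacity caps $u^S,u^F$ explicitly, a point the paper's unbounded-knapsack reduction glosses over (its special-case formulation silently drops the $0\leq n_i^S\leq u^S$, $0\leq n_i^F\leq u^F$ constraints, which would otherwise need $u^S,u^F$ chosen large enough not to bind). The paper's version, in turn, shows hardness persists even with both charger types active and unbounded charger counts per station. Your observation that the constant predictor must be acknowledged as a legitimate special case of the input $f$ is exactly the right caveat and mirrors the paper's ``$\bm{Y}_{TC}$ is $\mathcal{N}_{TC}$-independent'' device, so there is no gap.
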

\begin{proof}
	We prove the NP-hardness of the \textit{CPT} problem by reducing the unbounded knapsack (\textit{UKP}) problem \cite{andonov2000unbounded} to a special case of the \textit{CPT} problem where $\bm{Y}_{TC}$ is $\mathcal{N}_{TC}$-independent.
	
	The \textit{UKP} problem is illustrated as follows:
	given a knapsack of capacity $c>0$ and $n$ types of items, where each item of type $i$ has value $v_i>0$ and weight $w_i>0$, the objective is to find the number $x_i>0$ of each type of item such that the total value $\sum_{i=1}^n x_iv_i$ is maximized while the total weight does not exceed the capacity, $\sum_{i=1}^n x_iw_i\leq c$.

	If $\bm{Y}_{TC}$ is $\mathcal{N}_{TC}$-independent, then the \textit{CPT} problem is illustrated as a special case: given a budget $B$ and a set of charging stations $\textit{C}_{TC}$, where each station $c_i \in \textit{C}_{TC}$ is represented as a tupel $(l_i$, $n_i^S$, $n_i^F$, $e_i^S$, $e_i^F$, $\bm{p}_i^S$, $\bm{p}_i^F$, $\bm{y}_i^S$, $\bm{y}_i^F)$ (Def. \ref{def:charging station}), the objective is to determine a charger plan $\mathcal{N}_{TC}=\{(n_i^S, n_i^F) \mid i=1, \cdots, |\textit{C}_{TC}|\}$ such that the total revenue is maximized while the total cost of deploying chargers does not exceed the budget:

	\begin{equation}
		\begin{aligned}
			\max_{\mathcal{N}_{TC}} \quad & 
			\sum_{i=1}^{|\textit{C}_{TC}|}\sum_{t=1}^{T}{(y_{it}^S  \cdot p_{it}^S \cdot n_i^S + y_{it}^F  \cdot p_{it}^F  \cdot  n_i^F)}\\
			\textrm{s.t.} \quad 
			& \sum_{i=1}^{|\textit{C}_{TC}|}e_i^S \cdot n_i^S + e_i^F \cdot n_i^F \leq B
		\end{aligned}
	\end{equation} 
	
	Given $W=\{w_i\mid i=1,\cdots,n\}, V=\{v_i\mid i=1,\cdots,n\}$ and $X=\{x_i\mid i=1, \cdots, n\}$, we map an instance of the \textit{UKP} problem, $I = (W, V, X, n, c)$, with an \textbf{even} $n$, to the
	instance of the \textit{CPT} problem where $\textbf{Y}_{TC}$ is $\mathcal{N}_{TC}$-independent, denoted by $I' = (C_{TC}, B)$, as follows: $c$ is mapped to $B$; $n/2$ is mapped to $|\textit{C}_{TC}|$; for any $i=1, 2, \cdots, n/2$, $w_{2i-1}$ is mapped to the slow charger cost $e_i^S$ of $c_i \in C_{TC}$, $w_{2i}$ is mapped to the fast charger cost $e_i^F$ of $c_i \in C_{TC}$, $v_{2i-1}$ is mapped to the daily revenue $\sum_{t=1}^{T}{(y_{it}^S  \cdot p_{it}^S)}$ of $c_i \in C_{TC}$, and $v_{2i}$ is mapped to the daily revenue $\sum_{t=1}^{T}{(y_{it}^F  \cdot p_{it}^F)}$ of $c_i \in C_{TC}$.

	On the one hand, if there is a solution for the instance $I$, $X = (x_1, x_2, \cdots, x_n)$,
	then $\{(n_i^S, n_i^F)\mid n_i^S = x_{2i-1}, n_i^F = x_{2i}, i = 1, \cdots$, $|\textit{C}_{TC}|\}$ is a solution for the instance $I'$.

	On the other hand, if there is a solution for the instance $I'$, $\{(n_i^S, n_i^F)\mid i = 1, \cdots, |\textit{C}_{TC}|\}$, then the numbers
	\begin{equation}
		x_i=\left\{
		\begin{aligned}
			& n_i^S, \text{ if } i \in \{1, 3, \cdots, n-1\} \\
			& n_i^F, \text{ if } i \in \{2, 4, \cdots, n\}
		\end{aligned}
		\right.
	\end{equation}
	are a solution for the instance $I$.
	
	Thus, as long as there is a solution for the \textit{UKP} problem, there is a solution for the special case of the \textit{CPT} problem where $\bm{Y}_{TC}$ is $\mathcal{N}_{TC}$-independent, and vice versa. 
	Then the \textit{UKP} problem can be reduced to the simplified \textit{CPT} problem. Since the \textit{UKP} problem is NP-\textit{hard} \cite{andonov2000unbounded}, the general \textit{CPT} problem is NP-\textit{hard}.
\end{proof}

Note that if $\widehat{\bm{Y}}_{TC}$ is $\mathcal{N}_{TC}$-independent, then the CPT problem can be reduced to an unbounded Knapsack problem \cite{andonov2000unbounded}, which can be solved by dynamic programming or approximation algorithms.
Indeed, the existing studies on charger planning generally determine charging demands in advance by estimating from historical data \cite{du2018demand} or leveraging a plan-independent demand prediction method \cite{gopalakrishnan2016demand}, and thus the charger planning problem can be transformed into the well-known Knapsack and Set-Cover problems or their variants.
However, these studies do not apply to a new city.
Now let us return to our problem setting where $\widehat{\bm{Y}}_{TC}$ is $\mathcal{N}_{TC}$-dependent. 
In essence, the charging demands $\widehat{\bm{Y}}_{TC}$ are determined by a non-linear function of $\mathcal{N}_{TC}$, which requires to be trained with a deep learning model (see Sect. \ref{sec:3prediction}).
Thus, the existing solutions, whether dynamic programming or other approximation algorithms, are not directly applicable any more.
Alternatively, a straightforward approach could be used, which finds the optimal solution from all the possible charger plans ($\mathcal{N}_{TC}$) by the brute-force search. However, it has an unacceptable time complexity as follows.

\begin{theorem}
	If $e_i^S = e_i^F = 1, \forall c_i\in C_{TC}$, then the CPT problem has $\binom{B + 2|\textit{C}_{TC}|-1}{2|\textit{C}_{TC}|-1}$ possible charger plan solutions by the brute-force search.
	\label{theorem:cpt:solution}
\end{theorem}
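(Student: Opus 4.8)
The plan is to turn the question into a standard \emph{stars-and-bars} count. Setting $e_i^S=e_i^F=1$ for every $c_i\in C_{TC}$, the budget constraint in Definition~\ref{def:planning} collapses to $\sum_{i=1}^{|C_{TC}|}\bigl(n_i^S+n_i^F\bigr)\le B$, and by Definition~\ref{def: charger plan} a charger plan is then exactly a tuple of $2|C_{TC}|$ non-negative integers $(n_1^S,n_1^F,\dots,n_{|C_{TC}|}^S,n_{|C_{TC}|}^F)$ whose coordinates sum to at most $B$. Since the predicted utilizations $\widehat{y_{it}^S},\widehat{y_{it}^F}$ themselves depend on $\mathcal{N}_{TC}$, revenue need not be monotone in the charger counts, so a brute-force optimizer must enumerate the maximal feasible plans, i.e.\ those that spend the whole budget, $\sum_{i=1}^{|C_{TC}|}(n_i^S+n_i^F)=B$. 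I would count exactly this family — equivalently, the ways of distributing the $B$ identical budget units among the $2|C_{TC}|$ charger-type slots — which is the quantity the theorem asserts.

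The core step is a bijection. Write $m=2|C_{TC}|$ and relabel the plan entries $x_1,\dots,x_m\in\mathbb{N}$ with $\sum_{j=1}^m x_j=B$. Such vectors are precisely the weak compositions of $B$ into $m$ parts, equivalently the size-$B$ multisets over an $m$-element alphabet. The usual encoding --- place $B$ indistinguishable stars in a row and choose $m-1$ of the $B+m-1$ available slots to hold bars that cut the row into $m$ possibly-empty blocks --- is a bijection onto the $(m-1)$-subsets of a $(B+m-1)$-element set, so the number of plans equals $\binom{B+m-1}{m-1}=\binom{B+2|C_{TC}|-1}{2|C_{TC}|-1}$, as claimed. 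Checking that the map is well defined in both directions (distinct configurations yield distinct coordinate vectors and vice versa) is immediate.

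The only real obstacle is a scoping point. Definition~\ref{def:planning} also imposes the per-station caps $0\le n_i^S\le u^S$ and $0\le n_i^F\le u^F$, which the theorem statement suppresses; these caps only delete plans, so the displayed binomial is in general an upper bound on the search space, attained exactly when $u^S,u^F\ge B$. I would therefore either assume the caps are vacuous here (consistent with the theorem's aim of isolating the cost structure $e_i^S=e_i^F=1$) or phrase the conclusion as ``at most $\binom{B+2|C_{TC}|-1}{2|C_{TC}|-1}$''. In either reading the intended consequence follows: this count exceeds $B^{\,2|C_{TC}|-1}/(2|C_{TC}|-1)!$, so invoking the learned predictor $f$ once per candidate plan is computationally infeasible, which is precisely what motivates the \textit{TIO} algorithm in Sect.~\ref{sec:4planning}.
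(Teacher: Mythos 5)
Your proposal is correct and in substance identical to the paper's argument: both count the ways of distributing the $B$ unit-cost budget units among the $2|C_{TC}|$ charger-type slots by stars and bars, the paper packaging the count via a ``virtual charger'' shift (first counting positive allocations as $\binom{B-1}{2|C_{TC}|-1}$, then adding one placeholder per slot to recover $\binom{B+2|C_{TC}|-1}{2|C_{TC}|-1}$), while you invoke the weak-composition bijection directly, and your remark about the caps $u^S,u^F$ being tacitly assumed vacuous applies equally to the paper's own proof. One small caution: your justification for restricting to budget-exhausting plans is backwards---if revenue were not monotone in the charger counts, a brute-force optimizer would have to examine underspending plans as well, not only the maximal ones---but this side remark does not affect the count, and the paper makes the same full-budget interpretation implicitly.
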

\begin{proof}
	If $e_i^S = e_i^F = 1, \forall c_i\in C_{TC}$, then the budget $B$ is equal to the total number of chargers that we can deploy. Under this case, the number of the possible charger plan solutions for the \textit{CPT} problem can be proved in two steps.
	
	First, we change the constraints $n_i^S \geq 0$ and $n_i^F \geq 0$ to $n_i^S \geq 1$ and $n_i^F \geq 1$. The number of possible charger plans will be $\binom{B-1}{2|\textit{C}_{TC}|-1}$ by splitting $B$ to the $2|\textit{C}_{TC}|$ parts (for 2 charger types in $|\textit{C}_{TC}|$ stations) according to the stars and bars method in the context of combinatorial mathematics.
	
	Second, we add a ``virtual'' charger to each charger type of each station in advance, and accordingly the budget is increased by $2|\textit{C}_{TC}|$. 
	Similarly, the number of possible charger plans is $\binom{B + 2|\textit{C}_{TC}|-1}{2|\textit{C}_{TC}|-1}$.
	Note that the ``virtual'' chargers are placeholders to satisfy the changed constraints, which do not actually exist.
	After removing the ``virtual'' charger in each charger type of each station, the budget is still $B$, but the original constraints $n_i^S \geq 0$ and $n_i^F \geq 0$ can be satisfied. As a result, the number of the possible charger plan solutions for the \textit{CPT} problem is $\binom{B + 2|\textit{C}_{TC}|-1}{2|\textit{C}_{TC}|-1}$.
\end{proof}

Now we consider a small-scale problem setting with $|C_{TC}|=5$ and $B=100$ for example: given the time of 1 millisecond for demand prediction with a candidate charger plan, the total time required to traverse through all the $\binom{109}{9}$ plans will reach 137 years!
Not only that, for each changed charger plan, it requires to re-train the demand prediction model; given the time of 1 hour for training a model with a candidate charger plan, the total time required to train all the possible models will grow to $4.87\times10^8$ years!
Thus, it is necessary to design an effective solution that is able to greatly reduce the required number of trainings and predictions.

\subsection{Solution Framework of \textit{SPAP}}
\label{subsec:framework}

Figure \ref{fig:framework} gives the framework of \textit{SPAP}, consisting of two components: \textit{charger demand prediction} and \textit{charger planning}, which coordinate to make the charger plan with the highest revenue.

\noindent\textbf{Charger Demand Prediction.} This component addresses the CDPT problem with the following two modules:
\begin{itemize}
	\item{\textit{Feature Extraction}}. It extracts discriminative \textit{profile} and \textit{context features} for charging stations from both source and target cites (Sect. \ref{subsec:feature}).
	\item{\textit{Attention-based Spatial-Temporal City Domain Adaptation Network (AST-CDAN)}}. It leverages the features from both source and target cities and the demand data from source city to predict the charging demands in the target city (Sect. \ref{subsec:AST-CDAN}).

\end{itemize}

\noindent\textbf{Charger Planning.} This component addresses the CPT problem with the following two modules:
\begin{itemize}
	\item{\textit{Transfer Iterative Optimization}}. To greatly reduce the required number of trainings and predictions, the \textit{TIO} algorithm is designed to iteratively utilize the \textit{AST-CDAN} for charger demand prediction and the \textit{Charger Plan Fine-tuning} module to update the charger plan (Sect. \ref{subsec:tio}).
	
	\item{\textit{Charger Plan Fine-tuning}}. It fine-tunes the current charger plan to maximize the total revenue constrained by the budget using a dynamic programming algorithm (Sect. \ref{subsec:optimization}). 
\end{itemize}

\begin{figure}
	\includegraphics[width=\linewidth]{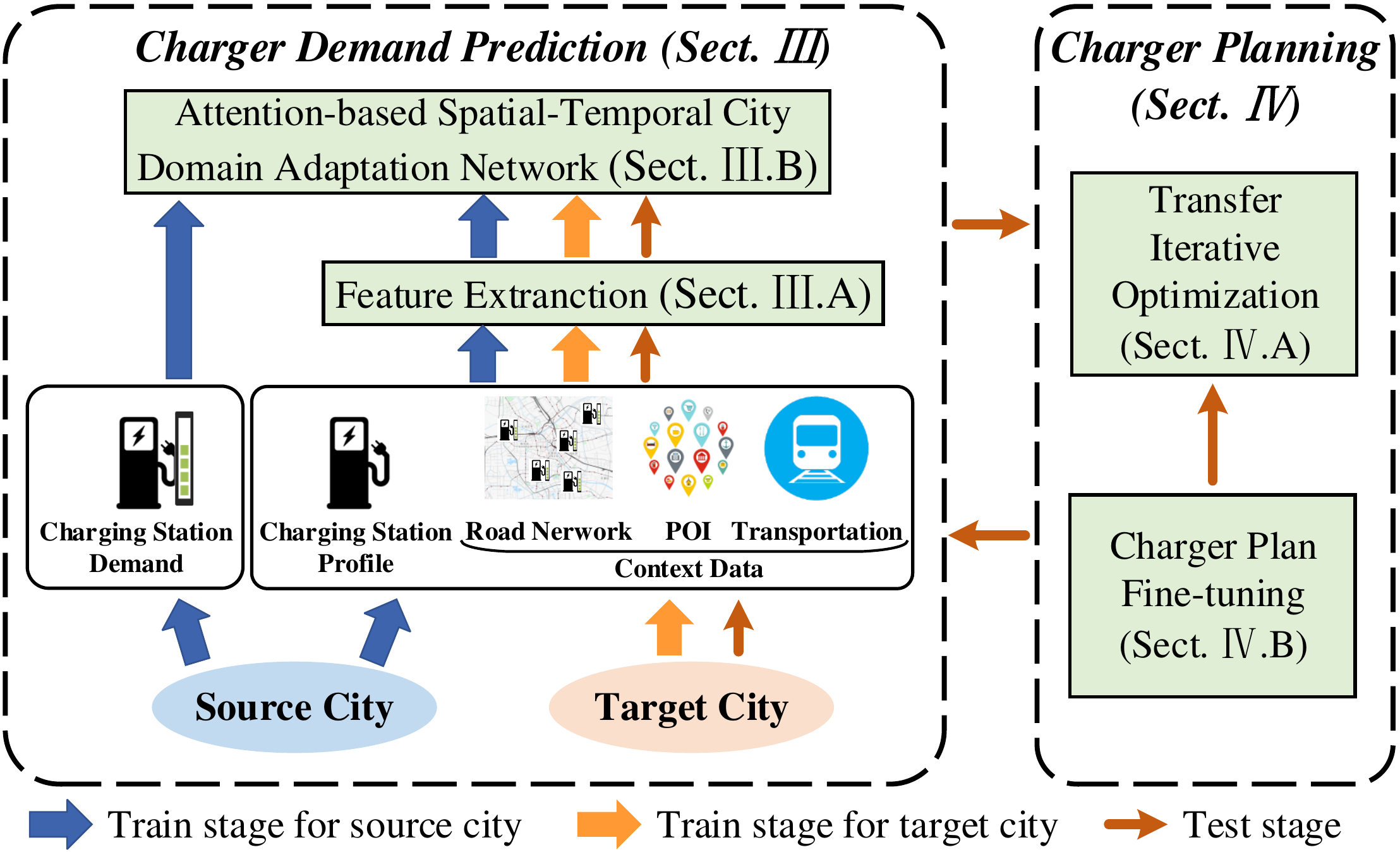}

	\caption{The solution framework of \textit{SPAP}}
	\label{fig:framework}\vspace{-10pt}
\end{figure}

\section{Charger Demand Prediction}
\label{sec:3prediction}

\subsection{Feature Extraction}
\label{subsec:feature}
To predict the charging demands, we extract the context and profile features of each charging station, and then analyze their correlations and also the feature domain shift between two cities.
\subsubsection{Context Features}
Intuitively, the number and diversity of POIs reflect the prosperity, and the surrounding road network and transportation conditions of a charging station reflect its convenience, all of which have influences on charging demands.
Thus, we extract useful context features in the surrounding region (within radius $r$) of each charging station:
\begin{itemize}
	\item{\textit{POI Features.}}
	We classify POIs into 8 categories: company, school, hotel, fast food, spot, community, hospital and life service. 
	Then, a 17-D POI feature vector is extracted, including \textit{fraction of POIs in each category}, \textit{number of POIs in each category} and \textit{POI entropy}. 

	\item\textit{Road Network Features.} 
	They include the \textit{average street length}, \textit{intersection density}, \textit{street density}, and \textit{normalized degree centrality of intersections}\footnote{The normalized degree centrality is defined as the proportion of links incident upon a node (i.e., the proportion of intersections connected to the given intersection).}, obtained from the nearby streets.

	\item{\textit{Transportation Features.}}
	They include the \textit{number of subway stations}, \textit{number of bus stops} and \textit{number of parking lots}.
\end{itemize}
The above features are concatenated as a single vector and fed into the prediction model.

\subsubsection{Profile Features}
Intuitively, the charging demand of a station $c_i$ is affected not only by the amount and type of chargers deployed in the station itself, but also by the nearby stations $NS(c_i)$ in its surrounding region (within radius $r$).
Thus we extract the profile feature vector as $[|NS(c_i)|, \sum_{c_j\in NS(c_i)}(n_j^S+n_j^F), n_i^S, n_i^F, n_i^S+n_i^F]$.

\begin{table}[htbp]
	\centering
	\caption{Top 8 features with highest Pearson coefficients}
	\label{table:pearson}
	\begin{tabular}{ll||ll}
		\hline
		Feature & Pearson & Feature & Pearson \\ \hline
		\# of fast chargers & 0.6678 & \# of spot POIs & 0.3087 \\
		\# of community POIs & 0.4244 & \# of slow chargers & -0.2945 \\
		\# of parking lots & 0.4010 & \# of school POIs & -0.2927 \\
		\# of all chargers & 0.4609 & Street density & 0.2632 \\ \hline
	\end{tabular}
\end{table}

\subsubsection{Feature Correlation Analysis} Table \ref{table:pearson} lists 8 features that are most correlated with the charging demands. It shows that the absolute Pearson coefficients are all above 0.26, indicating the effectiveness of the selected features for charging demand prediction.

\begin{figure}[t]
	\begin{subfigure}{0.5\linewidth}
		\includegraphics[width=\linewidth]{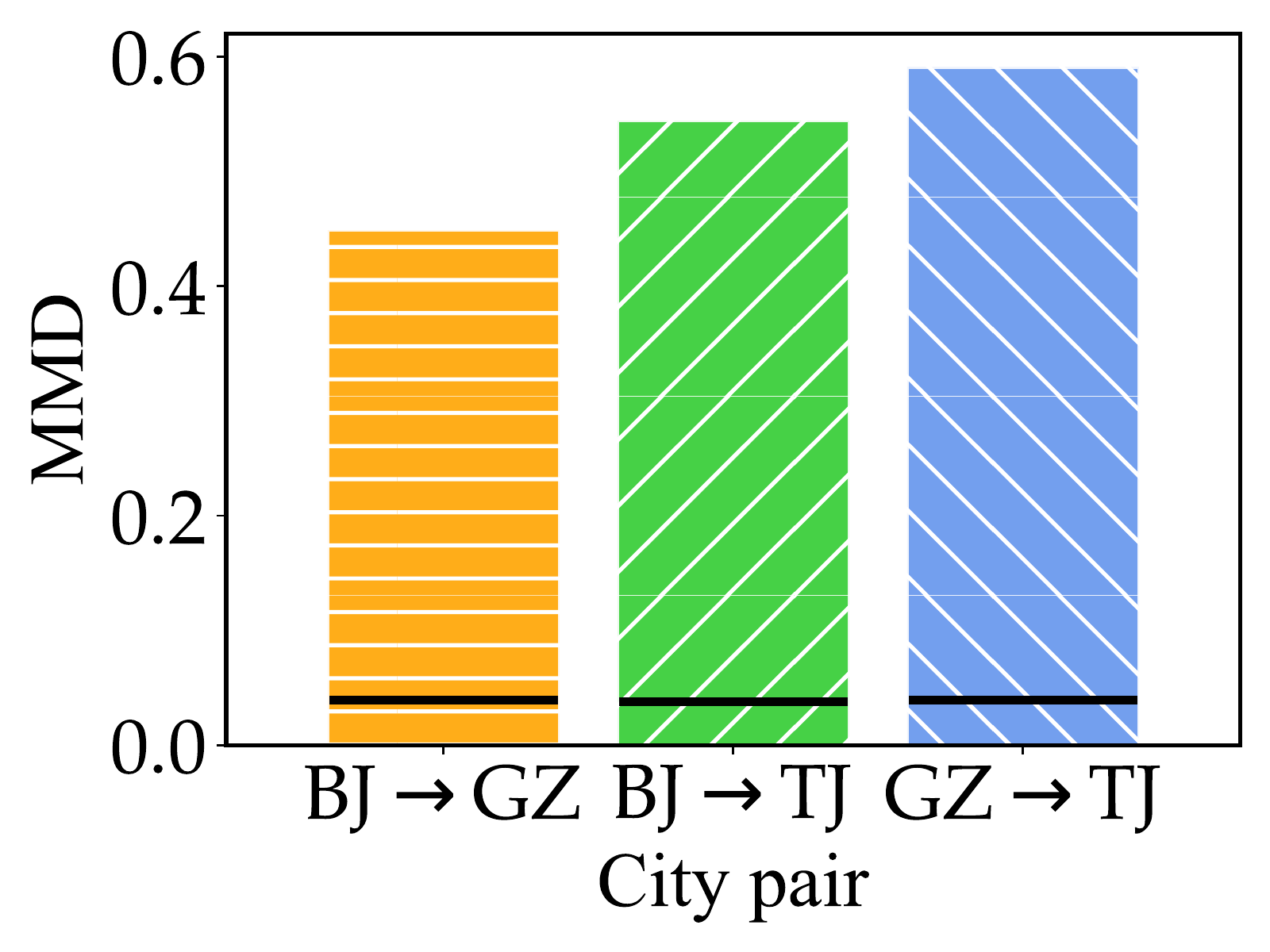}
		\subcaption{MMD results}
		\label{fig_mmd}
	\end{subfigure}%
	\begin{subfigure}{0.5\linewidth}
		\includegraphics[width=\linewidth]{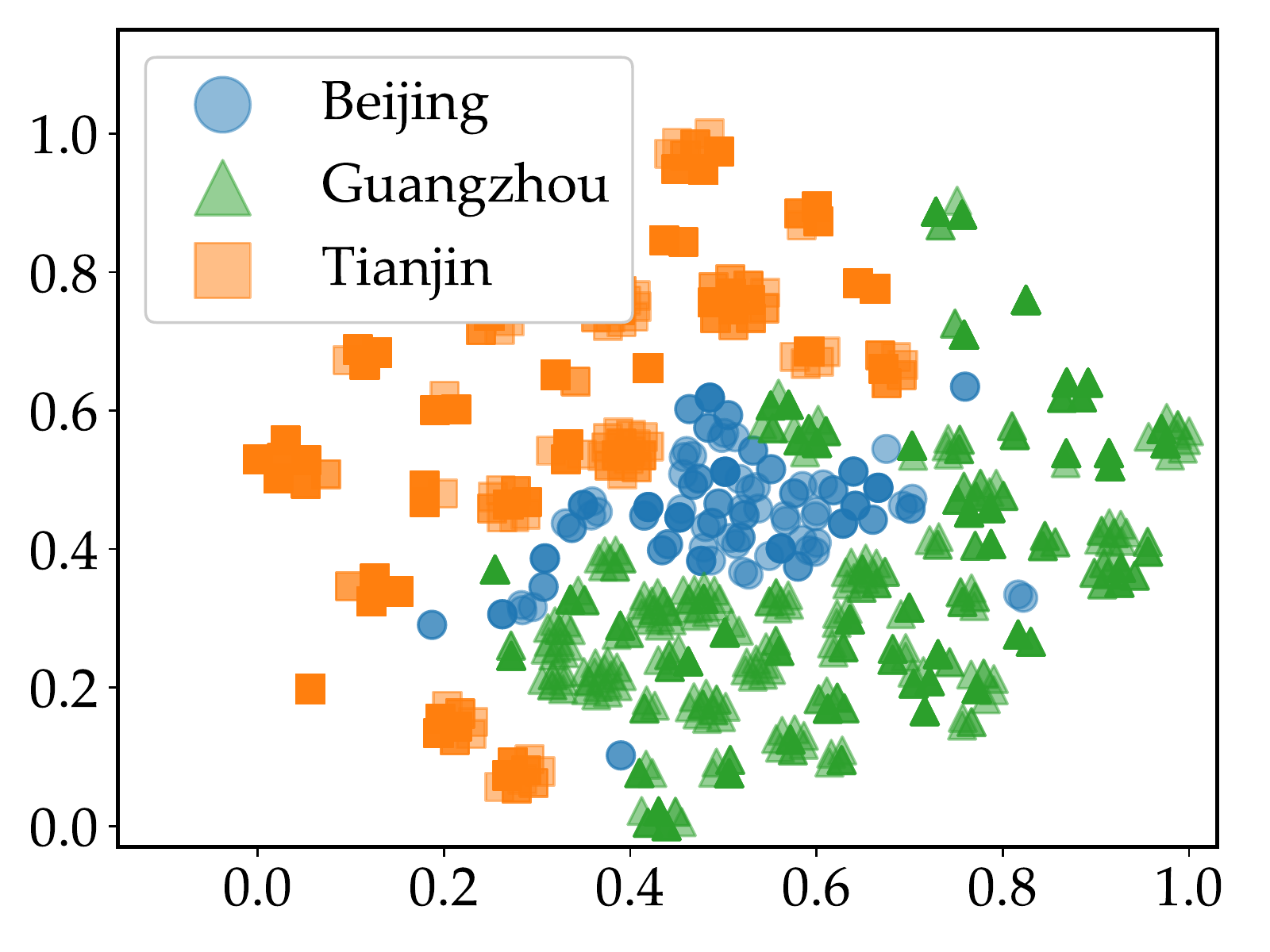}
		\subcaption{TSNE visualization}
		\label{fig_tsne}
	\end{subfigure}
	\caption{Domain analysis between cities}
	\label{fig:domain_analysis}
	\vspace{-10pt}
\end{figure}

\subsubsection{Domain Analysis between Cities}

To analyze the domain shift problem, we use the maximum mean discrepancy (MMD) \cite{pan2010domain} to quantify the difference between feature domains from the source city $SC$ and the target city $TC$, which maps the features into the reproducing kernel Hilbert space (RKHS) $\mathcal{H}$ \cite{smola2007hilbert} and calculates the square distance between the means of the embedded features:
\begin{equation}
	\text{MMD}(SC,TC)=\left \| \frac{1}{m_{s}}\sum_{i=1}^{m_{s}}\phi(s_{i}) - \frac{1}{m_{t}}\sum_{j=1}^{m_{t}}\phi(t_{j}) \right \|^{2}_{\mathcal{H}},
\end{equation}
where $s_{i}$ and $t_{j}$ are training samples from the source city and target city, $m_s$ and $m_t$ are the numbers of training samples, and $\phi(\cdot)$ is the kernel function.

We estimate the MMD for three cities, Beijing (BJ), Guangzhou (GZ) and Tianjin (TJ) in China, as shown in Fig. \ref{fig_mmd}.
The black solid lines in Fig. \ref{fig_mmd} are the rejecting thresholds for the null hypothesis test with power $\delta=0.05$. For all the three city pairs, the MMD results are much larger than the threshold, confirming that there exists a domain shift problem. Furthermore, we use the TSNE visualization \cite{maaten2008visualizing} to show the feature distributions of three cities, which reduces the feature dimension to 2.
As shown in Fig. \ref{fig_tsne}, Beijing and Guangzhou have more similar feature distribution, probably because they have closer city scale, EV development level and strategy (they develop EVs earlier and deploy more slow chargers, as shown later in Table \ref{table:datasets}).
By contrast, there is a larger feature difference between Tianjin and the other two cities, and the corresponding MMD values are also larger, probably because Tianjin develops EVs later and has a more different deployment strategy (more fast chargers).
In summary, both MMD and TSNE results demonstrate the necessity of designing a city domain adaptation approach to address the domain shift issue.

\begin{figure*}
	\centering
	\includegraphics[width=0.95\linewidth]{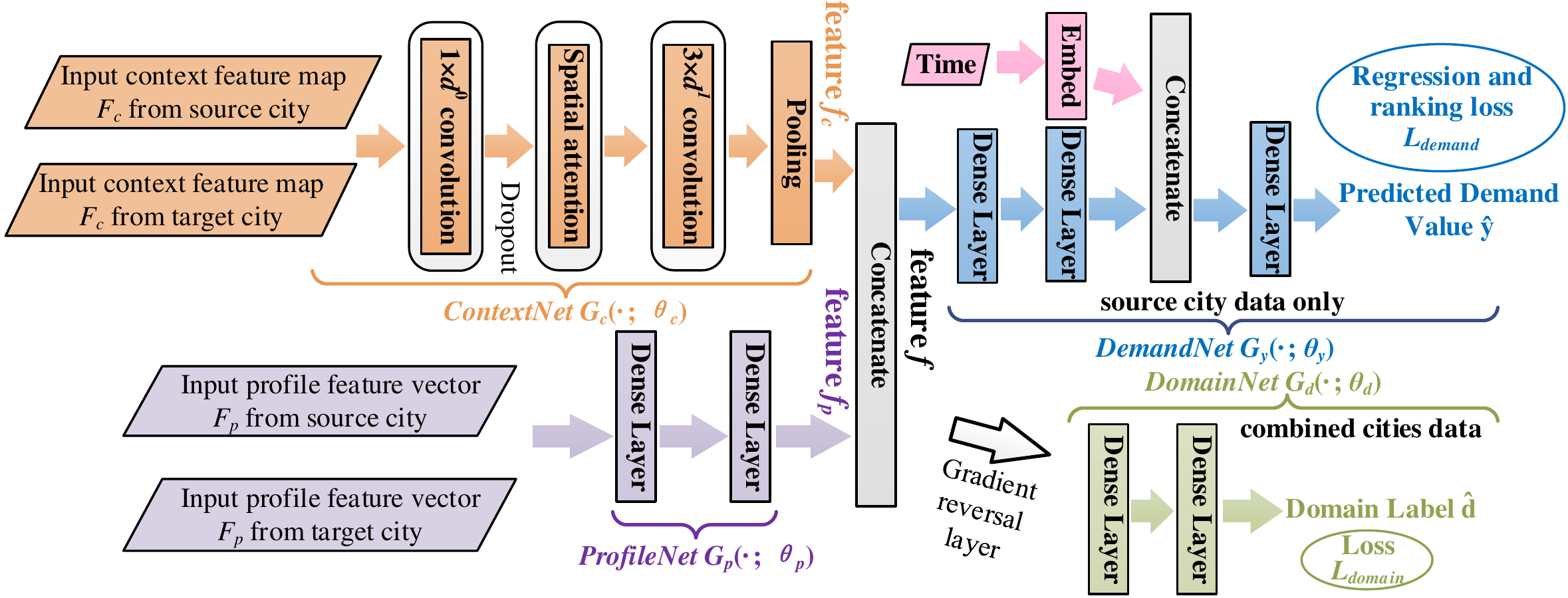}
	\caption{The architecture of \textit{AST-CDAN}}
	\label{fig:archi}
\end{figure*}
\subsection{Attention-based Spatial-Temporal City Domain Adaptation Network}
\label{subsec:AST-CDAN}
Figure \ref{fig:archi} shows the architecture of \textit{AST-CDAN}, consisting of four components: 
1) \textit{ContextNet} integrates convolution and spatial attention to model the influences from context features;
2) \textit{ProfileNet} learns latent features from the raw profile features by fully-connected layers;
3) \textit{DemandNet} is fed with the concatenation of outputs from \textit{ContextNet} and \textit{ProfileNet}, and integrates the temporal information to predict charging demands over different time intervals of one day;
4) \textit{DomainNet} guides the network to promote the features from \textit{ProfileNet} and \textit{ContextNet} to deeper domain-invariant representations for domain adaptation.
For convenience, let $S_{SC}$ and $S_{TC}$ denote the sets of training instances from source city and target city, respectively.

\subsubsection{ContextNet $G_c$}
It takes a feature map $F_c \in \mathbb{R}^{\lambda \times d}$ as input, which contains context features from $\lambda$ stations (itself and  $\lambda - 1$ nearest neighbor stations); $d$ is the dimension of context features.
We employ convolutional blocks to model the effects of context features. 
Each convolutional block contains one convolution layer, one batch normalization layer and one ReLU activation function:
\begin{equation}
	F_c^{out} = ReLU(BN(W_{c} * F_c^{in} + b_{c}))
\end{equation}
where $W_{c}$ and $b_{c}$ are learnable parameters, * represents the convolutional operation, and $BN$ means batch normalization \cite{ioffe2015batch}. 
To prevent overfitting, dropout \cite{JMLR:v15:srivastava14a} 
is applied after the first convolutional block.
Note that, different context features have inconsistent importance to the charging demands.
Therefore, we further use the spatial-wise attention model (SAM) \cite{gao2019scar} to encode the spatial dependencies, the details of which are illustrated in Fig. \ref{fig_sa}. 

\begin{figure}
	\centering
	\includegraphics[width=\linewidth]{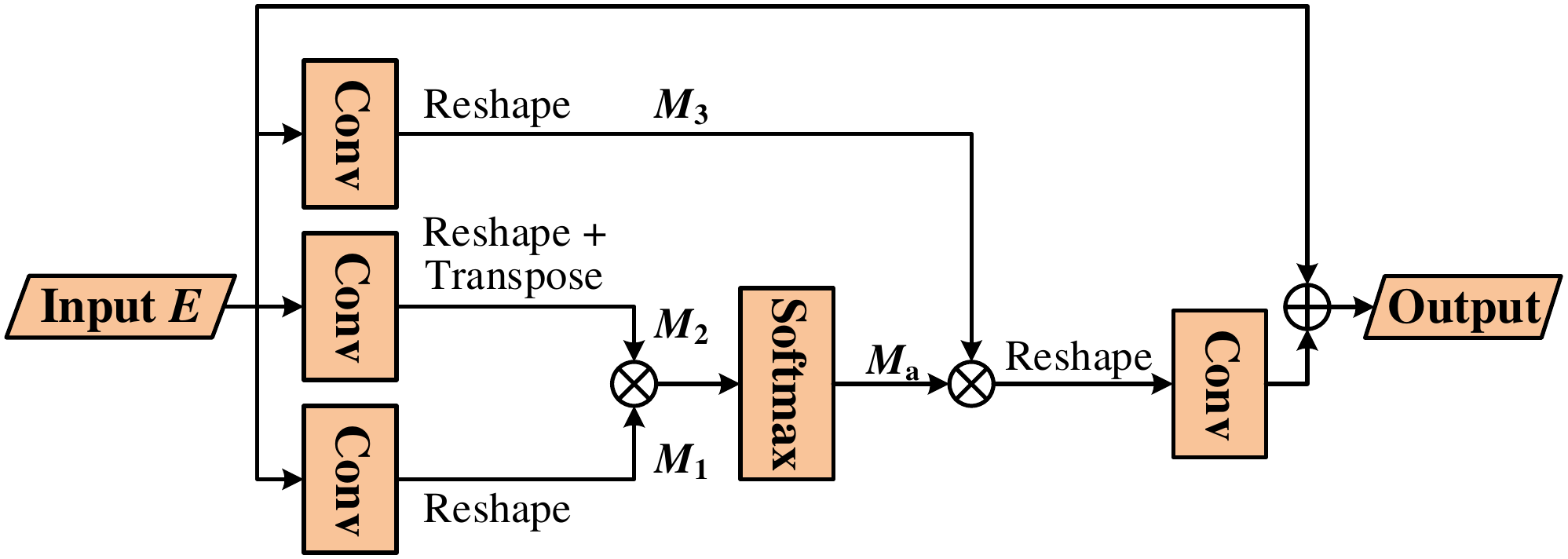}
	\caption{The architecture of Spatial Attention.} \label{fig_sa}
	\vspace{-10px}
\end{figure}

The spatial attention block takes $E$ as input to three $1 \times 1$ convolutional layers. After the reshape (and transpose) operations, we get three vectors $M_{1} \in \mathbb{R}^{HW \times 1}$, $M_{2} \in \mathbb{R}^{1 \times HW}$ and $M_{3} \in \mathbb{R}^{HW \times 1}$. $M_{1}$ and $M_{2}$ go through the matrix multiplication and softmax operations to get the spatial attention map $M_a \in \mathbb{R}^{HW \times HW}$. Then, we apply a matrix multiplication for $M_a$ and $M_3$, and reshape the output back to the size of $H \times W$. After one convolutional layer, we sum $E$ and the output to get $E_{a}$, which captures the effects of the contextual information on the original feature map. This process can be formulated as:
\begin{equation}
	M^{ji}_{a} = \frac{exp(M_{1}^{i} \cdot M_{2}^{j})}{\sum_{i=1}^{HW}exp(M_{1}^{i} \cdot M_{2}^{j})}
\end{equation}
\begin{equation}
	E_{a} = W_{a} * \text{vec}^{-1}_{H,W}(M_{a} \times M_{3}) + b_{a} + E
\end{equation}
where $W_{a}$ and $b_{a}$ are learnable parameters, * represents the convolution operation, $\text{vec}^{-1}_{H,W}$ means reshaping vector to matrix in shape of $H \times W$, and $M^{ji}_{a}$ means the influence of the value in the $i^{th}$ position on the value in the $j^{th}$ position.

The output of SAM is fed into the second convolutional block to enhance the performance. 
Finally, we apply the global average pooling operation on the output to get the final context feature $\textit{\textbf{f}}_{c}$.

\subsubsection{ProfileNet $G_p$}

It takes the raw profile features $F_p$ as input, and utilizes two fully-connected layers, each with a ReLU activation function. After that, we get the station profile feature $\bm{f}_{p}$.

The context feature $\textit{\textbf{f}}_c$ and the profile feature $\textit{\textbf{f}}_p$ are concatenated to obtain the final station feature $\textit{\textbf{f}}$, 
which will be fed into the \textit{DemandNet} and the \textit{DomainNet} simultaneously.

\subsubsection{DemandNet $G_y$}
The \textit{DemandNet} aims to predict the charging demand in each time interval. 
We use an embedding layer to transform the time into a vector $q$. 
Meanwhile, the feature $\textit{\textbf{f}}$ is fed into two fully-connected layers, and the output is concatenated with $q$ to get the hidden feature $H_{y}$. Finally, we use one fully-connected layer to get the predicted demand $\hat{y}$.

A direct method to optimize the \textit{DemandNet} is to minimize the regression loss over $S_{SC}$.
Inspired by \cite{liu2018will, sculley2010combined}, we find that the ranking loss is beneficial to enhance the regression prediction accuracy (Sect. \ref{subsec:prediction:performance}).
Thus, we combine the regression loss and the ranking loss for the \textit{DemandNet}, using a hyperparameter $\alpha$:
\begin{equation}
	L_{demand} = (1-\alpha) L_{reg} + \alpha L_{rank},
\end{equation}
where $L_{reg}$ is the mean square error between the predicted value $\hat{y}$ and the ground truth $y$ in $S_{SC}$:
\begin{equation}
	L_{reg}=\frac{1}{|S_{SC}|}\sum_{X \in S_{SC}}(\hat{y} - y)^{2}.
\end{equation}
We define $o_{ij} = y_{i} - y_{j}$ for the instance $i$ and $j$, which satisfy $y_{i} > y_{j}$. 
Thus, the probability that instance $i$ is listed higher than $j$ can be defined as $P_{ij} = \frac{e^{o_{ij}}}{1 + e^{o_{ij}}}$. 
Likewise, the predicted probability is $\hat{P}_{ij}$. 
Thus, we can use the cross entropy function to define the $L_{rank}$:
\begin{equation}
	L_{rank}=\frac{\sum_{i,j \land i\neq j} -P_{ij}log\hat{P_{ij}}-(1-P_{ij})log(1-\hat{P_{ij}})}{|S_{SC}|(|S_{SC}|-1)}
\end{equation}

\subsubsection{DomainNet $G_d$}
One way to solve the domain shift problem is to map the feature spaces of the source and target cities to the same space. Inspired by the previous study \cite{liu2018will}, we introduce the domain adaptation network to \textit{AST-CDAN}. The \textit{DomainNet} takes $\textit{\textbf{f}}$ as input and outputs a domain label that indicates which domain the feature belongs to. It contains two fully-connected layers:
\begin{equation}
	H_{d}^{1} = ReLU(W_{d}^{1} \textit{\textbf{f}} + b_{d}^{1})
\end{equation}
\begin{equation}
	\hat{d} = softmax(W_{d}^{2} H_{d}^{1} + b_{d}^{2})
\end{equation}
where $W_{d}^{1}$ and $W_{d}^{2}$ are the weight parameters, $b_{d}^{1}$ and $b_{d}^{2}$ are the bias parameters, and $\hat{d}$ is the predicted domain label.

We use the binary cross-entropy loss $L_{domain}$ to optimize the domain discrimination component:
\begin{equation}
	L_{domain} = \frac{1}{|S|} \sum_{X \in S} - d log\hat{d} - (1-d) log(1-\hat{d})
\end{equation}
where $d$ is the domain label and $S=S_{SC} \cup S_{TC}$.

\subsubsection{Optimization}
Based on the above components, we design the joint loss function composed by $L_{reg}$, $L_{rank}$ and $L_{domain}$. 
The \textit{DemandNet} needs to minimize $L_{reg}$ and $L_{rank}$ to improve the demand prediction performance. 
	The \textit{DomainNet} needs to minimize $L_{domain}$ for the domain classification. 
	However, the \textit{ContextNet} and \textit{ProfileNet} aim to 
	minimize $L_{reg}$ and $L_{rank}$ while maximizing $L_{domain}$, 
	because their goal is to produce domain-invariant feature representation 
	that is indistinguishable across domains.
	The optimization of the above components can be done with the following gradient updates:
	\begin{equation}\label{equ:gradient_station}
		\theta_s = \theta_s - \gamma \left(\alpha \dfrac{\partial L_{reg}}{\partial \theta_s} \!+\! \left(1\!-\!\alpha\right)\dfrac{\partial L_{rank}}{\partial \theta_s} \!-\! \beta\dfrac{\partial L_{domain}}{\partial \theta_s}\right)
	\end{equation}
	\begin{equation}
		\theta_y = \theta_y - \gamma \left(\alpha \dfrac{\partial L_{reg}}{\partial \theta_y} + \left(1-\alpha\right)\dfrac{\partial L_{rank}}{\partial \theta_y}\right)
	\end{equation}
	\begin{equation}
		\theta_d = \theta_d - \gamma \dfrac{\partial L_{domain}}{\partial \theta_d}
	\end{equation}
where $\theta_s$ are parameters of \textit{ContextNet} and \textit{ProfileNet}; $\theta_y$ are parameters of \textit{DemandNet}; $\theta_d$ are parameters of \textit{DomainNet}.

In Eq. (\ref{equ:gradient_station}), the gradients of $L_{reg}$, $L_{rank}$ and $L_{domain}$ are subtracted, which is different with summation in normal stochastic gradient descent (SGD) updates.
Accordingly, we add the gradient reversal layer \cite{ganin2015unsupervised} before the 
\textit{DomainNet}, which multiples the gradient from the \textit{DomainNet} by $-\beta$ 
during backward propagation. As a result, the joint loss function is defined as:
\begin{equation}
	L = (1-\alpha)L_{reg} + \alpha L_{rank} - \beta L_{domain}
\end{equation}

\section{Charger Planning}
\label{sec:4planning}
In this section, we first present the \textit{TIO} algorithm and then elaborate how to fine-tune the charger plan, following by the algorithm complexity analysis.

\subsection{Transfer Iterative Optimization}
\label{subsec:tio}

Hindered by the unacceptable complexity $\binom{B + 2|C_{TC}|-1}{2|C_{TC}|-1}$ of the straightforward approach in Sect. \ref{subsec:chllanges}, we adopt a heuristic strategy.
Generally, \textit{TIO} starts from a naive charger plan and iteratively fine-tunes the current charger plan toward a higher revenue.
In each iteration, we scale down the complexity by 1) decomposing the whole searching space into a small-scale collection of 5-element fine-tuned charger plan sets, and 2) only training the \textit{AST-CDAN} once.
In this way, the time complexity is proportional to the required number of iterations with a constant upper bound (Theorem \ref{theorem:iterupper}).

Specifically, given a charger plan $\mathcal{N}_{TC}$ in the target city, where station $c_i$'s charger plan is $(n_i^S, n_i^F)$, the fine-tuned charger plans are obtained as follows: 1) extending $c_i$'s charger plan to a fine-tuned charger plan set $N_i = \{(n_i^S, n_i^F)$, $(n_i^S+1, n_i^F)$, $(n_i^S-1, n_i^F)$, $(n_i^S, n_i^F+1)$, $(n_i^S, n_i^F-1)\}$, and 2) obtaining the collection of fine-tuned charger plan sets in the target city as $\bm{\mathcal{N}}'_{TC} = \{N_i|i = 1, \cdots, |C_{TC}|\}$.
In this way, $5^{|C_{TC}|}$ new plans could be constructed from $\bm{\mathcal{N}}'_{TC}$. 
If we re-train the \textit{AST-CDAN} and predict the charging demand for each plan, it will require to respectively conduct model training and prediction operations for $5^{|C_{TC}|}$ times, which is still impractical.
To address this issue, we further adopt two strategies to reduce the time complexity:

1) The \textit{AST-CDAN} is trained only once in each iteration, taking the current plan $\mathcal{N}_{TC}$ as the input. 
The features for fine-tuned plans only have slight difference on number of chargers, compared with the current plan, implying the versatility of the \textit{AST-CDAN} trained with $\mathcal{N}_{TC}$.
In each iteration, this strategy reduces the number of trainings from $5^{|C_{TC}|}$ to 1.

2) For each fine-tuned plan of any station $c_i$, we fix the features of the nearby stations the same as those extracted from the current plan $\mathcal{N}_{TC}$, and only use the new features of station $c_i$ to fed into the model trained with $\mathcal{N}_{TC}$, outputting the demand prediction results.
The features of nearby stations only have slight difference for those fine-tuned plans. This feature maintenance affects prediction results by 0.82\%, 0.22\%, and 1.46\% in transfer cases of BJ$\rightarrow$GZ, BJ$\rightarrow$TJ, and GZ$\rightarrow$TJ respectively.
Thus, in each iteration, this strategy reduces the number of prediction operations from $5^{|C_{TC}|}$ to 5, while ensuring almost the same prediction accuracy.

For convenience, the predicted demands for any fine-tuned plan set $N_i$ are denoted by $\widehat{\Gamma}_i = \{(\gamma_{ijt}^S, \gamma_{ijt}^F)| j = 1, \cdots, 5, t = 1, \cdots, T\}$ and the predicted demands for all the fine-tuned plan sets in all the stations are denoted by $\widehat{\bm{Y}}'_{TC} = \{\widehat{\Gamma}_i|i = 1, \cdots, |C_{TC}|\}$. The simplified prediction operation $f'$ is defined as:
\begin{equation}
	\widehat{\bm{Y}}'_{TC}=f'(f(\mathcal{N}_{SC}, \cdot, \bm{D}_{SC}, \bm{D}_{TC}, \bm{Y}_{SC}), \mathcal{N}_{TC}, \bm{\mathcal{N}}'_{TC})
\end{equation}
where $f(\mathcal{N}_{SC}, \cdot, \bm{D}_{SC}, \bm{D}_{TC}, \bm{Y}_{SC})$ is a predictor trained with $\mathcal{N}_{TC}$ and used for outputting the prediction results for any fine-tuned plan in $\bm{\mathcal{N}}'_{TC}$.
By now, we have obtained $\bm{\mathcal{N}}'_{TC}$ and $\widehat{\bm{Y}}'_{TC}$, so the remaining problem is how to update $\mathcal{N}_{TC}$ by selecting a charger plan from $N_i$ for each station $c_i$, so that the total revenue is maximized under the budget constraint $B$, as we will elaborate in Sect. \ref{subsec:optimization}.
Note that, there are biases in predicted demands, caused by the drift of data between training and prediction.
Therefore, the updated plan will be confirmed by retraining \textit{AST-CDAN} and prediction again, which will further determine whether to stop the \textit{TIO} algorithm.

Algorithm \ref{algo} shows the pseudocode of the \textit{TIO} algorithm, which operates with five main steps:

\begin{enumerate}
	\item Initialize the revenue and construct a naive charger plan by evenly allocating budget to each charger type of each candidate station, as illustrated in Fig. \ref{fig:example:inital} (lines 1-2).
	\item Train the \textit{AST-CDAN} model with the current charger plan $\mathcal{N}_{TC}$ to predict the demands $\widehat{\bm{Y}}_{TC}$, and compute the revenue $R_{TC}$;
	if the increased revenue is not greater than a threshold $\theta$, then return the current plan (lines 4-8).
	\item Extend the current plan $\mathcal{N}_{TC}$ to the collection of fine-tuned plan sets $\bm{\mathcal{N}}'_{TC}$ (line 9), as illustrated in Fig. \ref{fig:example:iter1}.
	\item Predict the charging demands $\widehat{\bm{Y}}'_{TC}$ for the fine-tuned plan sets $\bm{\mathcal{N}}'_{TC}$ (line 10).
	\item Invoke the \textit{DP-MK} algorithm (Algo. \ref{alg:dpmk}) to update the current plan $\mathcal{N}_{TC}$ (line 11), as illustrated in Fig. \ref{fig:example:iter1}; then go to step 2).
\end{enumerate}

\begin{algorithm}[ht]
	\SetKwInOut{Input}{input}\SetKwInOut{Output}{output}
	\Input{$\mathcal{N}_{SC}, \bm{D}_{SC}, \bm{D}_{TC},\bm{Y}_{SC}, B$}
	\Output{$\mathcal{N}_{TC}$}
	\BlankLine
	Initialize the revenue: $R \leftarrow 0$\;
	Construct an initial plan $\mathcal{N}_{TC}$ by evenly allocating budget $B$ to each candidate station and charger type\;
	\While{$True$}{
		$\widehat{\bm{Y}}_{TC} \leftarrow f(\mathcal{N}_{SC}, \mathcal{N}_{TC}, \bm{D}_{SC}, \bm{D}_{TC}, \bm{Y}_{SC})$\;
		$R_{TC} \leftarrow \sum_{i=1}^{|C_{TC}|}\sum_{t=1}^{T}(\widehat{y_{it}^S} p_{it}^S n_i^S + \widehat{y_{it}^F} p_{it}^F n_i^F)$\;
		\If{$R_{TC} - R \leq \theta$}{
			$\textbf{return } \mathcal{N}_{TC}$\
		}
		$R \leftarrow  R_{TC}$\;
		Build the fine-tuned plan sets $\bm{\mathcal{N}}'_{TC}$ from $\mathcal{N}_{TC}$\;
		$\widehat{\bm{Y}}'_{TC} \leftarrow f'(f(\mathcal{N}_{SC}, \cdot, \bm{D}_{SC}, \bm{D}_{TC}, \bm{Y}_{SC}), \mathcal{N}_{TC}, \bm{\mathcal{N}}'_{TC})$\;
		$\mathcal{N}_{TC} \leftarrow \textit{DP-MK}(\bm{\mathcal{N}}'_{TC}, \widehat{\bm{Y}}'_{TC}, B)$\;
	}
	\caption{Transfer Iterative Optimization (\textit{TIO})}\label{algo}
\end{algorithm}

\begin{figure}[ht]
	\begin{subfigure}{0.493\linewidth}
		\includegraphics[width=\linewidth]{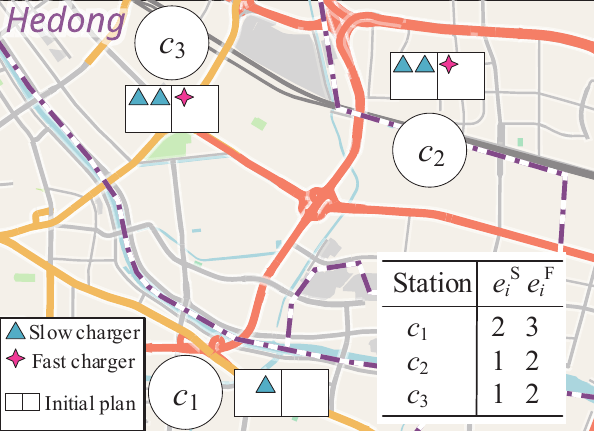}
		\subcaption{The initial plan generated by evenly allocating the budget ($B=12$)}
		\label{fig:example:inital}
	\end{subfigure}\hfill
	\begin{subfigure}{0.493\linewidth}
		\includegraphics[width=\linewidth]{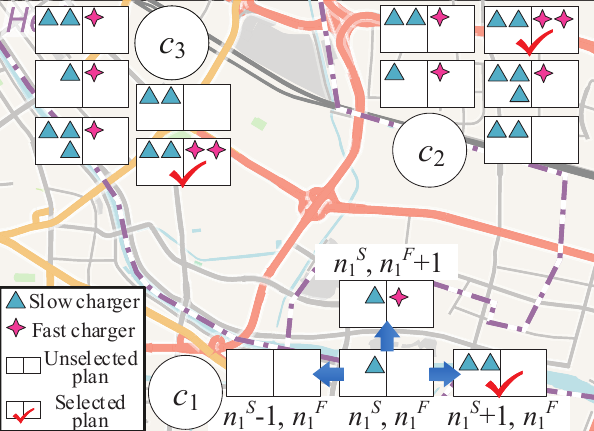}
		\subcaption{The fine-tuned plan sets and the updated plan in the first iteration}
		\label{fig:example:iter1}
	\end{subfigure}
	\caption{A running example of the \textit{TIO} algorithm}
	\label{fig:tio_example}
	\vspace{-15px}
\end{figure}

\subsection{Charger Plan Fine-tuning}
\label{subsec:optimization}
For convenience, let $N_i=\{(n_{ij}^S,n_{ij}^F)|j=1,\cdots,5\}$ denote the fine-tuned charger plan set of station $c_i$.
To optimize the plan $\mathcal{N}_{TC}$ toward higher revenue, it is required to solve the charger plan fine-tuning problem: given the collection of fine-tuned charger plan sets $\bm{\mathcal{N}}'_{TC}$, the predicted demands $\widehat{\bm{Y}}'_{TC}$ and the budget constraint $B$, the objective is to select one plan $(n_{ij}^S,n_{ij}^F)$ from $N_i$ for each station $c_i$, so that the total revenue is maximized while the total deployment cost of chargers does not exceed $B$.
In essence, the problem is an instance of the Multiple-choice Knapsack (MK) problem \cite{nauss19780}, formulated as follows:
\begin{equation}
	\begin{aligned}
		\max_{\nu_{ij}} \!\quad\! & 
		\sum_{i=1}^{|\textbf{C}_{TC}|}\sum_{j=1}^{5}\sum_{t=1}^{T}{(\gamma_{ijt}^S \!\cdot\! p_{it}^S \!\cdot\! n_{ij}^S \!\cdot\! \nu_{ij} \!+\! \gamma_{ijt}^F \!\cdot\! p_{it}^F \!\cdot\!  n_{ij}^F \!\cdot\! \nu_{ij})}\\
		\textrm{s.t.} \quad 
		& \sum_{i=1}^{|\textbf{C}_{TC}|}\sum_{j=1}^{5}c_i^S \!\cdot\! n_{ij}^S \!\cdot\! \nu_{ij} + c_i^F \!\cdot\! n_{ij}^F \!\cdot\! \nu_{ij} \leq B\\
		&  \sum_{j=1}^{5}\nu_{ij} = 1, \quad i=1,\cdots, |C_{TC}|\\
		& \nu_{ij} \in \{0, 1\}, \quad i=1,\cdots, |C_{TC}|, j=1,\cdots,5\\
	\end{aligned}
\end{equation}
where $\nu_{ij}$ is a binary decision variable, representing whether to choose the $j$-th fine-tuned plan $(n_{ij}^S,n_{ij}^F)$ for station $c_i$. 
The MK problem has been proven to be NP-complete, and it was pointed out that the dynamic programming approach performs well for a relatively small-scale problem \cite{nauss19780}.
Moreover, branch and bound algorithms with different relaxations could be used for providing approximate solutions while greatly reducing the time complexity \cite{nauss19780}.
In this work, we use a dynamic programming algorithm \textit{DP-MK} to obtain the optimal solution with the time complexity of $O(|C_{TC}|B)$.

Algorithm \ref{alg:dpmk} shows the pseudocode of \textit{DP-MK} algorithm, where 
\begin{itemize}
	\item $W[i][j]$ is the cost of the $j$-th fine-tuned plan of the $i$-th station;
	\item $V[i][j]$ is the daily revenue of the $j$-th fine-tuned plan of the $i$-th station;
	\item $R[i][k]$ is the maximum revenue under the budget of $k$, considering only the first $i$ stations;
	\item $S[i][k]$ records the optimal selection for the maximum revenue under the budget of $k$, considering only the first $i$ stations.
\end{itemize}

\begin{algorithm}
	\SetKwData{Left}{left}\SetKwData{This}{this}\SetKwData{Up}{up}
	\SetKwFunction{Union}{Union}\SetKwFunction{FindCompress}{FindCompress}
	\SetKwInOut{Input}{input}\SetKwInOut{Output}{output}
	\Input{$\bm{\mathcal{N}}'_{TC}$, $\widehat{\bm{Y}'}_{TC}$, $B$}
	\Output{$\mathcal{N}_{TC}$}
	
	\For{$i = 1, \cdots, |\textit{C}_{TC}|$}{
		\For{$j = 1, \cdots, 5$}{
			$W[i][j] \leftarrow e_i^S n_{ij}^S + e_i^F n_{ij}^F $\;
			$V[i][j]\!\leftarrow\!\sum_{t=1}^{T}{(\gamma_{ijt}^S  p_{it}^S  n_{ij}^S + \gamma_{ijt}^F  p_{it}^F n_{ij}^F)}$\;
		}
	}
	\For{$i=0, 1, \cdots, |C_{TC}|$}{
		\For{$k=0, 1, \cdots, B$}{
			$R[i][k] \leftarrow 0$\;
			$S[i][k] \leftarrow $ an empty list\;
		}
	}
	\For{$i=1, 2, \cdots, |C_{TC}|$}{
		\For{$j=1, 2, \cdots, 5$}{
			\For{$k=W[i][j], W[i][j]+1, \cdots, B$}{
				\If{$R[i][k] < R[i-1][k-W[i][j]] + V[i][j]$}{
					$R[i][k]\!\leftarrow\!R[i-1][k - W[i][j]] + V[i][j]$\;
					$S[i][k] \leftarrow S[i-1][k - W[i][j]]$\;
					Append $j$ to the tail of $S[i][k]$\;
				}
			}
		}
	}
	\textbf{return} $S[|C_{TC}|][argmax_k R[|C_{TC}|][k]]$\;
	\caption{\textit{DP-MK}}
	\label{alg:dpmk}
\end{algorithm}

\subsection{Algorithm Complexity Analysis}
As previously mentioned, the time complexity of the \textit{TIO} algorithm is proportional to the required number of iterations, with a constant upper bound as follows.
\begin{theorem}
	The required number of iterations for the \textit{TIO} algorithm has an upper bound $\frac{B}{\theta} \times \max_i\left(\frac{\sum_{t=1}^T p_{it}^S}{e_i^S}, \frac{\sum_{t=1}^T p_{it}^F}{e_i^F}\right)$.
	\label{theorem:iterupper}
\end{theorem}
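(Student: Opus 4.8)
The plan is to combine two elementary facts: (i) every iteration of \textit{TIO} that does not terminate strictly increases the recorded revenue $R$ by more than $\theta$; and (ii) the revenue of any budget‑feasible charger plan is at most $B\cdot\rho$, where $\rho=\max_i\bigl(\frac{\sum_{t=1}^{T}p_{it}^S}{e_i^S},\frac{\sum_{t=1}^{T}p_{it}^F}{e_i^F}\bigr)$. Dividing the largest attainable revenue by the guaranteed per‑iteration increment $\theta$ then yields the stated constant bound.

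For (i), I would simply track $R$ through Algorithm~\ref{algo}: it is initialized to $0$ (line~1), and at the end of iteration $k$ it is overwritten with the freshly computed $R_{TC}^{(k)}$ (line~8) \emph{only} when the test $R_{TC}^{(k)}-R\le\theta$ of lines~6--7 fails, i.e. only when $R_{TC}^{(k)}>R^{(k-1)}+\theta$. Hence the values $R^{(0)}=0<R^{(1)}<R^{(2)}<\cdots$ recorded after successive non‑terminating iterations satisfy $R^{(m)}>m\theta$.

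For (ii), I would invoke Definition~\ref{def:charging station}: each $\widehat{y_{it}^S},\widehat{y_{it}^F}$ is a utilization rate, hence lies in $[0,1]$, so
\[
R_{TC}^{(k)}=\sum_{i=1}^{|C_{TC}|}\sum_{t=1}^{T}\bigl(\widehat{y_{it}^S}\,p_{it}^S n_i^S+\widehat{y_{it}^F}\,p_{it}^F n_i^F\bigr)\le\sum_{i=1}^{|C_{TC}|}\Bigl(n_i^S\sum_{t=1}^{T}p_{it}^S+n_i^F\sum_{t=1}^{T}p_{it}^F\Bigr).
\]
Rewriting $n_i^S\sum_{t}p_{it}^S=(e_i^S n_i^S)\cdot\frac{\sum_{t}p_{it}^S}{e_i^S}$ and symmetrically for the fast chargers, the right‑hand side is at most $\rho\sum_{i}(e_i^S n_i^S+e_i^F n_i^F)$. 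It remains to note that every plan $\mathcal{N}_{TC}$ that \textit{TIO} ever holds is budget‑feasible: the initial plan spends exactly $B$ (line~2), and \textit{DP-MK} (Algorithm~\ref{alg:dpmk}) only selects station‑plan combinations whose cost index $k$ never exceeds $B$. Therefore $\sum_i(e_i^S n_i^S+e_i^F n_i^F)\le B$ and $R_{TC}^{(k)}\le\rho B$.

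Putting the pieces together: if \textit{TIO} performs $K$ iterations, the first $K-1$ are non‑terminating, so $(K-1)\theta<R^{(K-1)}\le\rho B$, whence $K\le\frac{B}{\theta}\,\rho=\frac{B}{\theta}\max_i\bigl(\frac{\sum_{t=1}^{T}p_{it}^S}{e_i^S},\frac{\sum_{t=1}^{T}p_{it}^F}{e_i^F}\bigr)$ (up to rounding to an integer number of iterations), which is independent of the input data. The only points needing care are identifying which quantity actually accumulates against $\theta$ — resolved by following $R$ through lines~1 and~6--8 — and verifying that every intermediate plan stays within budget; both are settled by inspection of the pseudocode, so I do not anticipate a substantive obstacle beyond this bookkeeping.
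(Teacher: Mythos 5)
Your proposal is correct and follows essentially the same argument as the paper: the revenue of any budget-feasible plan is at most $B\cdot\max_i\bigl(\frac{\sum_t p_{it}^S}{e_i^S},\frac{\sum_t p_{it}^F}{e_i^F}\bigr)$, and since each non-terminating iteration increases the recorded revenue by more than $\theta$, the iteration count is bounded by $\frac{B}{\theta}\max_i\bigl(\frac{\sum_t p_{it}^S}{e_i^S},\frac{\sum_t p_{it}^F}{e_i^F}\bigr)$. Your write-up is in fact slightly more careful than the paper's (which states the revenue bound as $\frac{B}{w}$ rather than $Bw$, an apparent typo), since you explicitly justify the budget feasibility of every intermediate plan and the use of utilization rates in $[0,1]$.
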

\begin{proof}
	Let $u=\max_i\left(\frac{\sum_{t=1}^T p_{it}^S}{e_i^S}\right)$ and $v=\max_i\left(\frac{\sum_{t=1}^T p_{it}^F}{e_i^F}\right)$ denote the maximum revenues that per unit cost can produce by slow chargers and fast chargers among all the stations $\textit{C}_{TC}$.
	Further, let $w=\max(u, v)$ denote the maximum revenue that per unit cost can produce among any charger and any $c_i \in \textit{C}_{TC}$.
	There is an upper bound of revenue $R$: $\frac{B}{w}$. 
	Since the increased revenue is at least $\theta$ in each iteration, there is an upper bound of the number of iterations: $\frac{B}{w\cdot \theta}$ = $\frac{B}{\theta} \times \max_i\left(\frac{\sum_{t=1}^T p_{it}^S}{e_i^S}, \frac{\sum_{t=1}^T p_{it}^F}{e_i^F}\right)$.
\end{proof}

Then we analyze the time complexity of the \textit{DP-MK} algorithm. The number of loops in line 9 is $|C_{TC}|$; the number of loops in line 10 is 5; the number of loops in line 11 is $B$ (at most). As a result, the time complexity of \textit{DP-MK} algorithm is $O(|C_{TC}|B)$.

Finally we analyze the time complexities of the \textit{TIO} algorithm where four steps are processed in each iteration: the time complexity for model training in step 2) is a constant, about 1 hour; that in steps 3) and 4) is $O(|C_{TC}|)$; and that in step 5) is $O(|C_{TC}|B)$. 
There are at most $O(B)$ iterations.
Thus, the total time complexity of the \textit{TIO} algorithm is $O(|C_{TC}|B^2)$.

\section{Experimental Evaluation}
\label{sec:5evaluation}

\subsection{Experimental Settings}
\textbf{Datasets.} We collected the charging station data, including the locations, number of slow/fast chargers, service prices, and historical charging demands, from a public EV charging platform \textit{Star Charge}\footnote{https://en.starcharge.com}, which has the highest monthly usage in Chinese public EV charging market.
Meanwhile, we collected the POI and transportation data from AutoNavi\footnote{https://amap.com}, and collected the road network data from OpenStreetMap\footnote{https://www.openstreetmap.org}.
All the data are from three cities, Beijing, Guangzhou and Tianjin in China, and the charging demands are recorded during 8:00-21:00 every day from 05/12/2019 to 15/01/2020. Table \ref{table:datasets} shows the dataset details in each city. In addition, according to China's charging pile industry report \cite{CPreport}, we set $e_i^S$ and $e_i^F$ of each station as 33000 and 54000 in RMB. The radius $r$ used for feature extraction is set to 1 km. 

We mainly consider three cross-city prediction/planning tasks, BJ $\rightarrow$ GZ, BJ $\rightarrow$ TJ and GZ $\rightarrow$ TJ, which is in line with the development order and level of EV charging stations in three cites.

All the experiments are run in a Linux server (CPU: E5-2620 v4 @ 2.10GHz, GPU: NVIDIA Tesla P100).
For the \textit{AST-CDAN}, we use Pytorch to build it, and set $\alpha \in \left\lbrace 0, 0.3, 0.5, 0.8, 1.0\right\rbrace$; $\beta=0.1$; the batch size $bs=64$; the learning rate $lr \in \{0.01$, $0.005$, $0.001$, $0.0005$, $0.0001\}$. For the \textit{TIO}, we set $\theta=0.1$, $u^S=40$, and $u^F=20$.

\begin{table}[h]
	\centering
	\caption{Details of the datasets}
	\label{table:datasets}
	\begin{tabular}{lllllll}
		\hline
		& \multicolumn{6}{c}{City}                                                                  \\ \cline{2-7} 
		Data & \multicolumn{2}{l}{Beijing} & \multicolumn{2}{l}{Guangzhou} & \multicolumn{2}{l}{Tianjin} \\ \hline
		\# of charging stations  & 138 & & 123 & & 101 & \\
		\# of slow chargers  & 1473 & & 1434 & & 273 & \\
		\# of fast chargers  & 733 & & 608 & & 551 & \\
		\# of POIs & 576726 & & 503920 & & 362160 & \\
		\# of transportation facilities & 251758 & & 211756 & & 155996 & \\
		\# of roads & 841 & & 726 & & 651 & \\ \hline
	\end{tabular}
\end{table}

\subsection{Evaluation on Charger Demand Prediction}
\textbf{Baselines.} We compare our \textit{AST-CDAN} with three baselines:
\begin{itemize}
	\item \textit{LASSO (Least Absolute Shrinkage and Selection Operator)}, a well-known linear regression method that performs both variable selection and regularization to enhance the prediction accuracy;
	\item \textit{GBRT (Gradient Boost Regression Tree)}, a boosting method based on decision tree that can deal with heterogeneous data and has been widely used in many data mining tasks;
	
	\item \textit{MLP (Multi-layer Perceptron)}, a feedforward deep neural network with four full-connected layers and one ReLU activation function.
	
\end{itemize}

\noindent\textbf{Variants.} We also compare our \textit{AST-CDAN} with three variants:
\begin{itemize}
	\item \textit{AST-CDAN/AP}, which removes both the spatial attention and the \textit{ProfileNet} from \textit{AST-CDAN};
	\item \textit{AST-CDAN/P}, which removes the \textit{ProfileNet} from \textit{AST-CDAN};
	\item \textit{AST-CDAN/D}, which removes the \textit{DomainNet} from \textit{AST-CDAN}.
\end{itemize}

\noindent\textbf{Metric.}
One widely used metric, \textit{RMSE} (Root Mean Square Error), is adopted to evaluate the prediction performance.

\begin{table}[h]
	\setlength{\tabcolsep}{5pt}
	\centering
	\caption{Comparison results for charger demand prediction}

	\label{table:slow}
	\begin{threeparttable}[b]
		\begin{tabular}{c|cccccc}
			\hline
			& \multicolumn{2}{c}{BJ $\rightarrow$ GZ}                                   
			& \multicolumn{2}{c}{BJ $\rightarrow$ TJ}                                    
			& \multicolumn{2}{c}{GZ $\rightarrow$ TJ}                                    
			\\ \cline{2-7} 
			\multirow{-2}{*}{RMSE} 
			& \multicolumn{1}{c}{Slow} & \multicolumn{1}{c}{Fast} & \multicolumn{1}{c}{Slow} & \multicolumn{1}{c}{Fast} & \multicolumn{1}{c}{Slow} & \multicolumn{1}{c}{Fast}
			\\ \hline
			LASSO
			& 0.3052 & 0.2935 & 0.2999 & 0.3527 & 0.2995 & 0.3666 \\
			GBRT                     
			& 0.3077 & 0.2919 & 0.3378 & \underline{0.3321} & 0.3032 & \underline{0.3429} \\
			MLP
			& \underline{0.3010} & \underline{0.2914} & \underline{0.2906} & 0.3435 & \underline{0.2939} & 0.3560 \\
			\hline
			AST-CDAN & \textbf{0.2834} & \textbf{0.2860} & \textbf{0.2584} & \textbf{0.3234} & \textbf{0.2401} & \textbf{0.3264} \\ 
			Gains\tnote{1} & 5.85\% & 1.85\% & 11.08\% & 2.62\% & 18.31\% & 4.81\% \\
			\hline
		\end{tabular}
		\begin{tablenotes}
			\item[1] The performance gain is achieved by our \textit{AST-CDAN} compared with the best baseline (underlined).
		\end{tablenotes}
	\end{threeparttable}
\end{table}

\label{subsec:prediction:performance}

\noindent\textbf{Performance Comparisons}.
Table \ref{table:slow} compares our AST-CDAN with the three baselines for slow and fast charger demands in three city pairs. 
We observe that: (1) the deep learning methods (\textit{MLP} and \textit{AST-CDAN}) are superior to traditional regression methods (\textit{LASSO} and \textit{GBRT}), demonstrating the advantages of capturing non-linear correlations between features and charging demands; (2) our \textit{AST-CDAN} performs best due to the added ability of domain adaptation; (3) our \textit{AST-CDAN} has more gains in BJ$\rightarrow$TJ and GZ$\rightarrow$TJ, indicating its bigger superiority when the target city (Tianjin) has a more different feature distribution (as analyzed in Sect. \ref{subsec:feature}).

\begin{figure}[t]
	\begin{subfigure}{0.5\linewidth}
		\includegraphics[width=\textwidth]{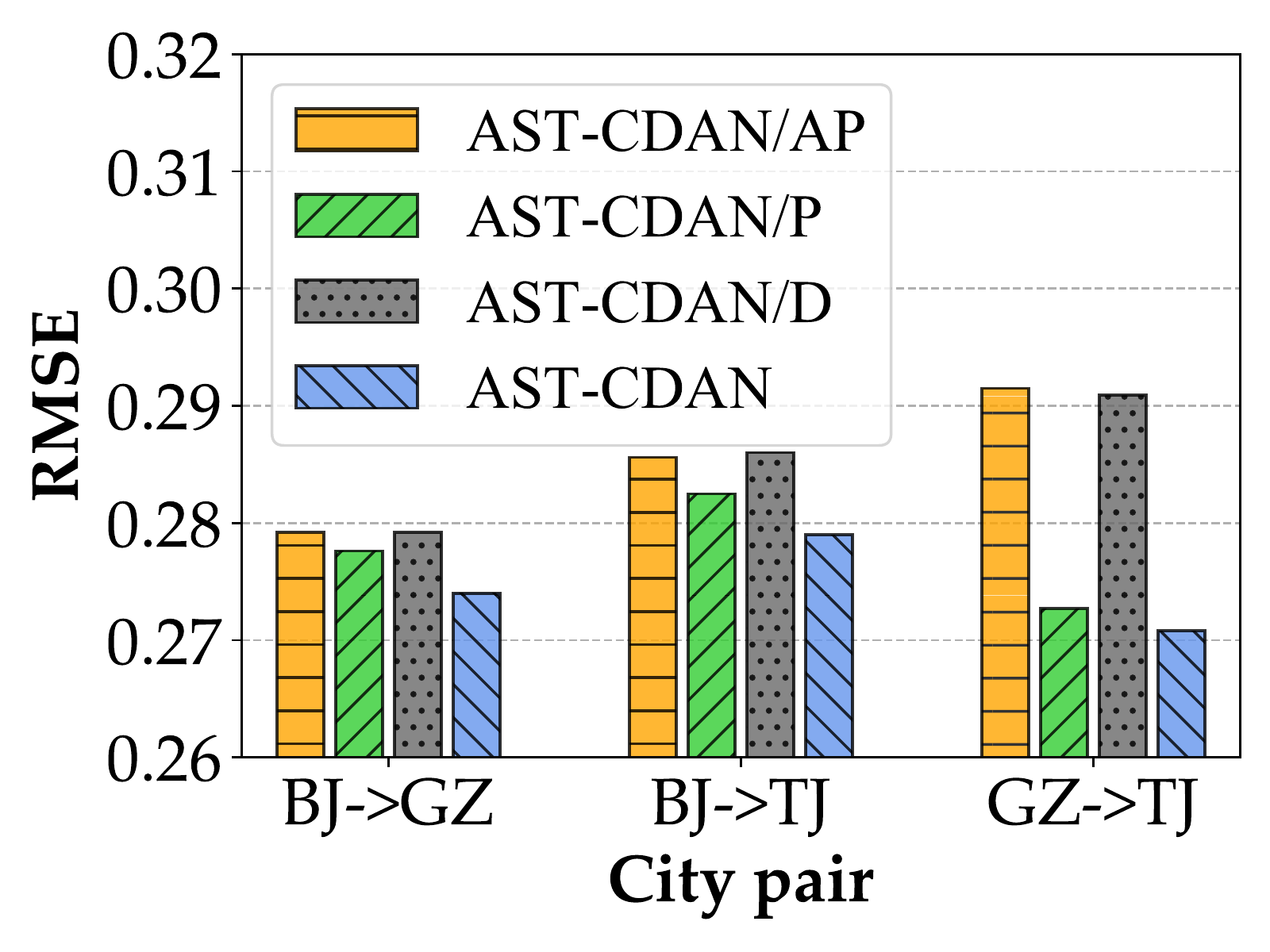}
		\subcaption{Comparisons with variants}
		\label{fig:evaluation:prediction:variant}
	\end{subfigure}%
	\begin{subfigure}{0.5\linewidth}
		\includegraphics[width=\textwidth]{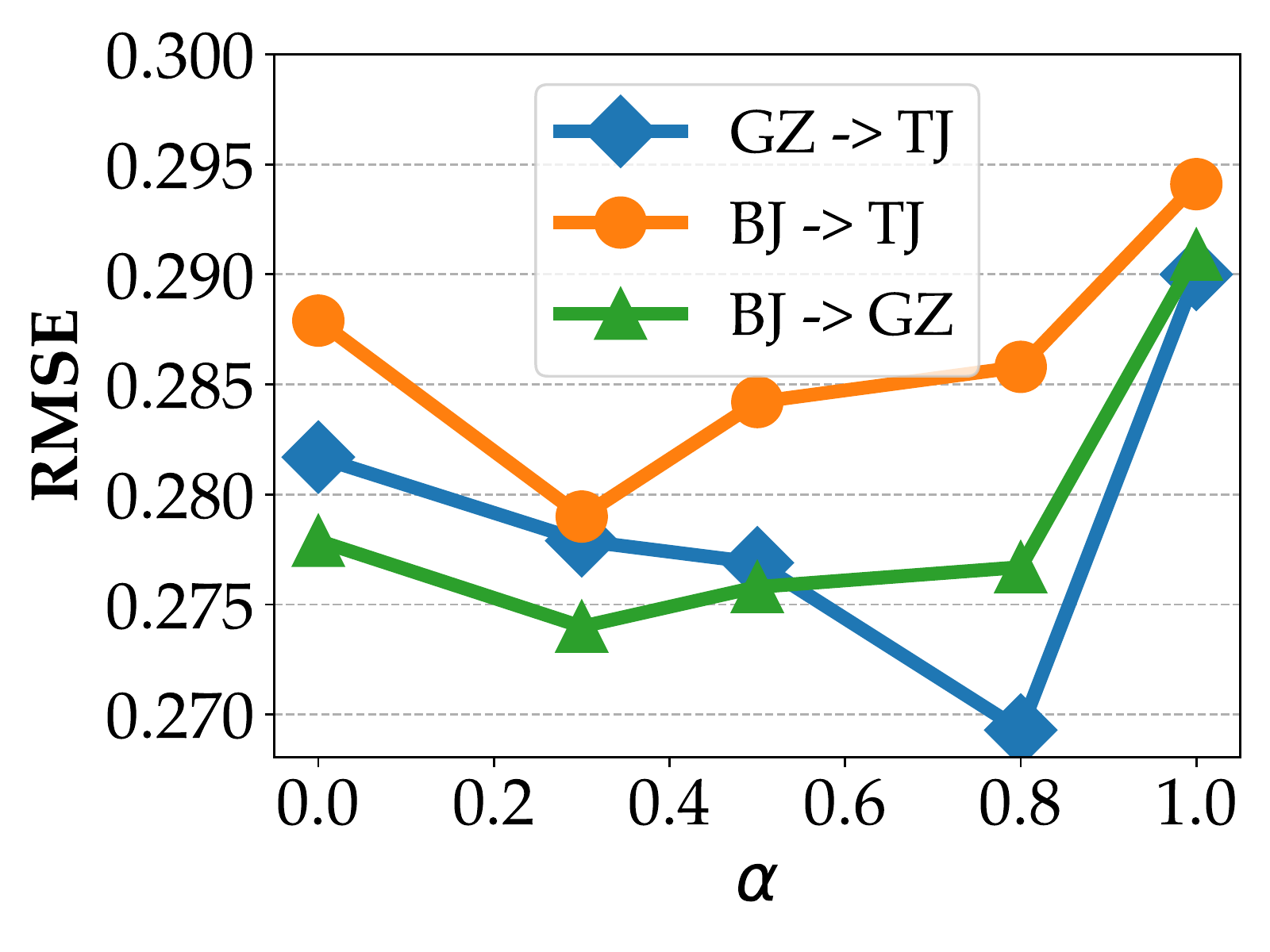}
		\subcaption{Effects of ranking loss weight}
		\label{fig:evaluation:prediction:alpha}
	\end{subfigure}

	\caption{Effects of different components on charging demand prediction}
	\label{fig:evaluation:prediction}
	\vspace{-10px}
\end{figure}

\noindent\textbf{Effect of Each Component}.
Figure \ref{fig:evaluation:prediction:variant} compares our \textit{AST-CDAN} with its three variants for the overall prediction results in three city pairs.
We observe that: (1) \textit{AST-CDAN/P} outperforms \textit{AST-CDAN/AP}, indicating the important role of the spatial attention on capturing contextual information; (2) \textit{AST-CDAN} outperforms \textit{AST-CDAN/P}, indicating the necessity of modeling profile features; (3) The RMSE of \textit{AST-CDAN/D} increases significantly compared with \textit{AST-CDAN}, implying a great negative influence of the domain shift problem, while \textit{AST-CDAN} has an obvious advantage on addressing the domain shift problem. 
To further illustrate whether \textit{DomainNet} can learn domain-invariant feature representations, we show the TSNE visualization results of feature distributions for \textit{AST-CDAN} with and without \textit{DomainNet} respectively in Fig. \ref{fig:evaluation:prediction:domain}.
It is obvious to see that the feature distribution becomes more consistent between source city (BJ) and target city (TJ) with \textit{DomainNet}. We also get a lower MMD value with \textit{DomainNet} (0.0575) than that without \textit{DomainNet} (0.9612).
In addition, Fig. \ref{fig:evaluation:prediction:alpha} shows the effect of ranking loss weight $\alpha$.
The optimal results are achieved always when $\alpha$ is equal to some intermediate value, implying that the ranking loss can help to enhance the prediction accuracy.

\begin{figure}[t]

	\begin{subfigure}{0.5\linewidth}
		\includegraphics[width=\linewidth]{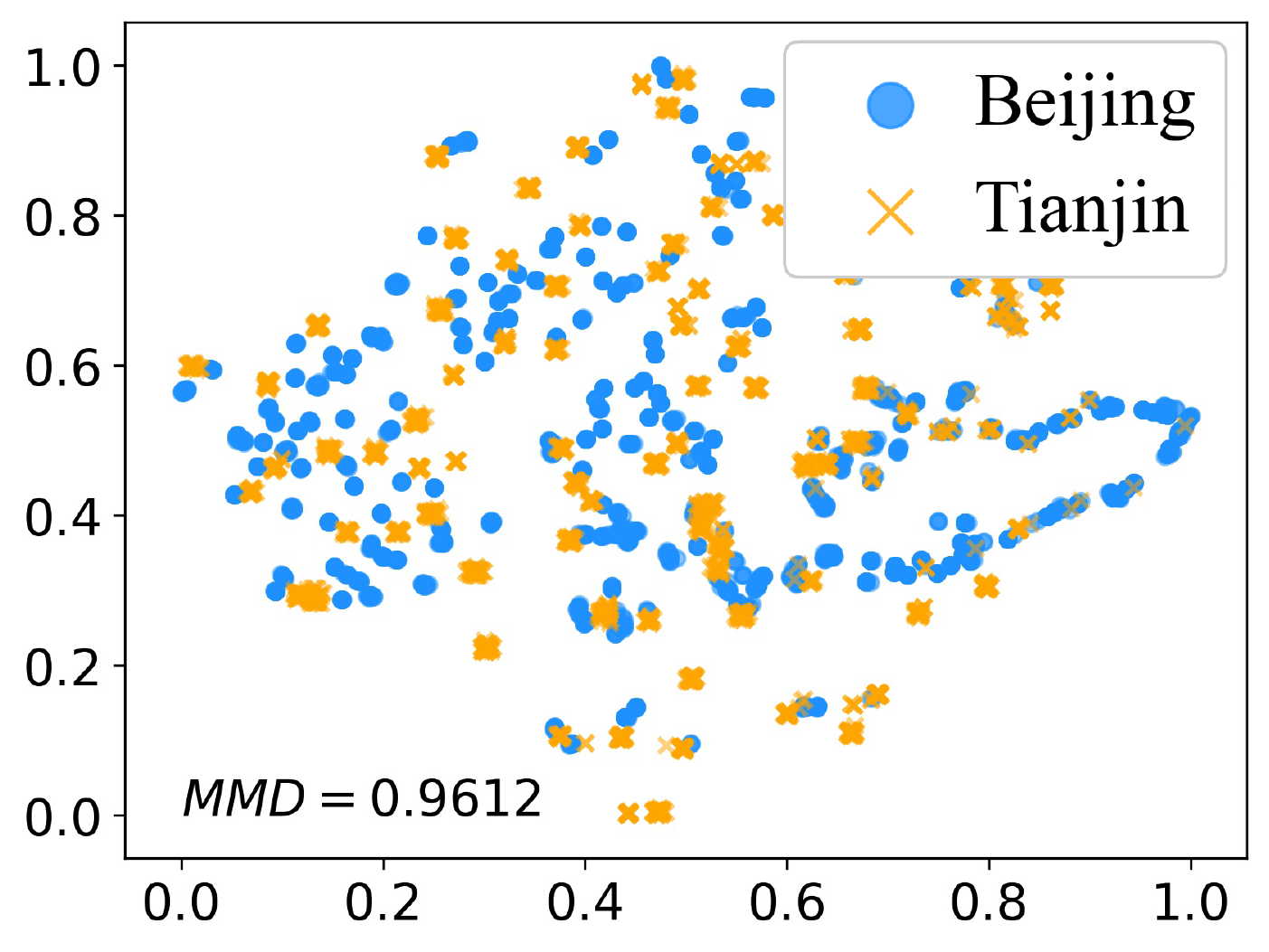}
		\subcaption{Features w/o the \textit{DomainNet}}
	\end{subfigure}%
	\begin{subfigure}{0.5\linewidth}
		\includegraphics[width=\linewidth]{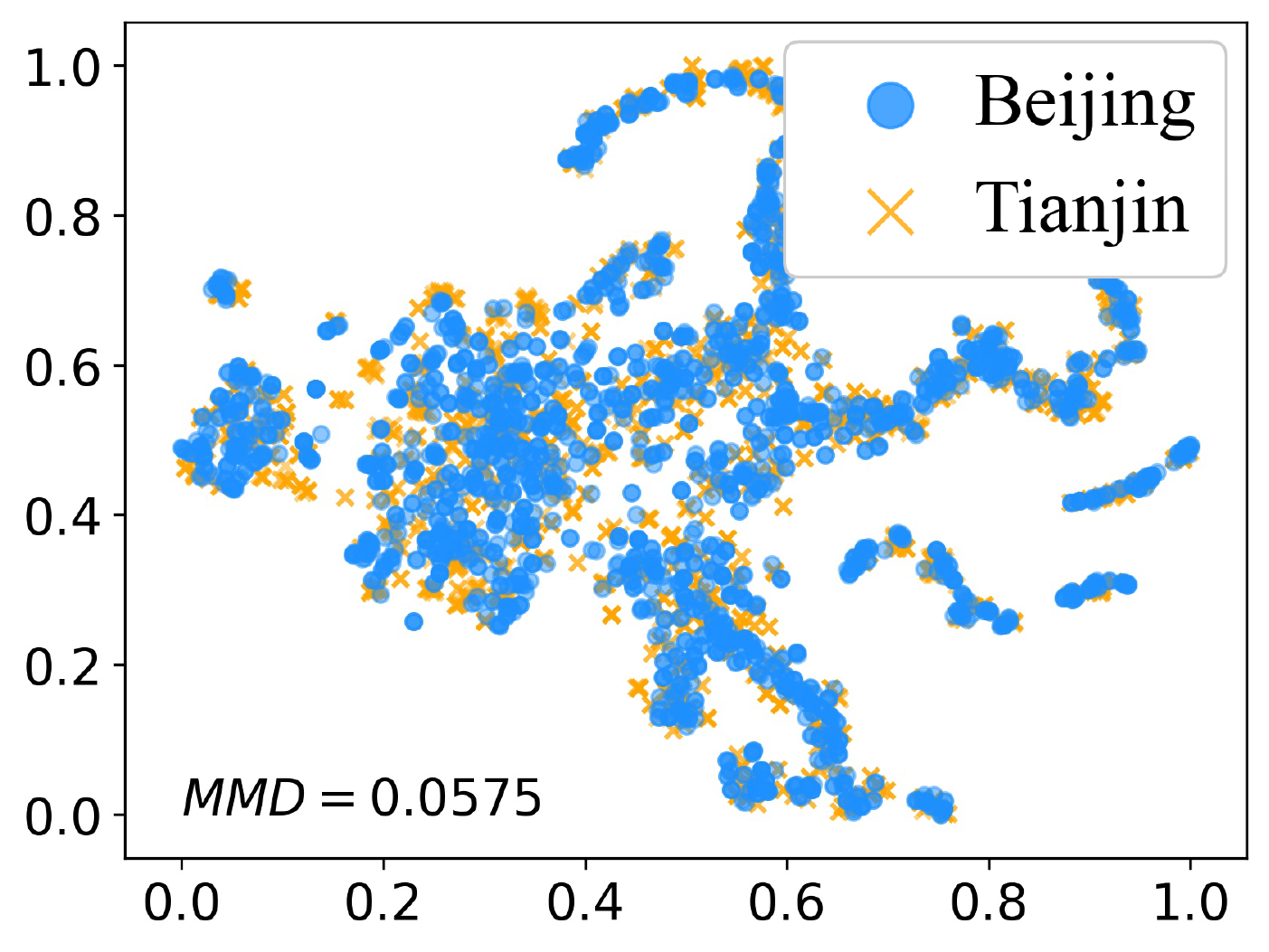}
		\subcaption{Features with the \textit{DomainNet}}
	\end{subfigure}
	\caption{Effect of \textit{DomainNet} on TSNE visualization results}
	\label{fig:evaluation:prediction:domain}
	\vspace{-10px}
\end{figure}

\subsection{Evaluation on Charger Planning}
\label{evaluation:planning}

\textbf{Baselines.} We compare our \textit{TIO} algorithm with four baselines:
\begin{itemize}

	\item \textit{Parking as Proxy (Park)}, which follows the work \cite{chen2013locating} to take parking sessions as the proxy of charging demands. We builds the Voronoi diagram by taking charging stations as seeds and aggregates parking sessions\footnote{The parking sessions data are provided by www.soargift.com} in each divided region as the charging demand. Then the budget is allocated to each charging station proportionally to the charging demand.
	\item \textit{Population as Proxy (Pop)}, which follows the work \cite{xiong2017optimal} to estimate the charging demand in proportion to the population of the region to which the charging station belongs. \textit{Population as Proxy} has the same process as \textit{Parking as Proxy} except that the population\footnote{The population map data are provided by www.worldpop.org} is used for estimation.
	\item \textit{Even}, which is a naive solution by evenly allocating the budget to each charger type of each charging station.
	\item \textit{Charger-based Greedy (CG)} algorithm \cite{du2018demand}, which assumes that the charging demands of all the stations are already known, and greedily places the charger in the candidate station with the maximum increased demand reward.
	In our experiments, we use the historical charging demands in the real world as inputs, although it is impractical in a new city.
\end{itemize}

We also compare the algorithms with the real-world EV charger plans (named as \textit{``Real''}) that have been deployed in the three cities. Specifically, we compute the total cost that is required to deploy the \textit{real} plan, and use it as the budget to determine charger plans with different algorithms for performance comparisons.

\noindent\textbf{Metrics.} We compare all the algorithms in terms of \textit{daytime revenue} (during 8:00-21:00 of one day). 
Besides, we evaluate the time complexity in terms of \textit{\# of trainings}.

\begin{figure}[t]
	\begin{subfigure}{0.5\linewidth}
		\includegraphics[width=\linewidth]{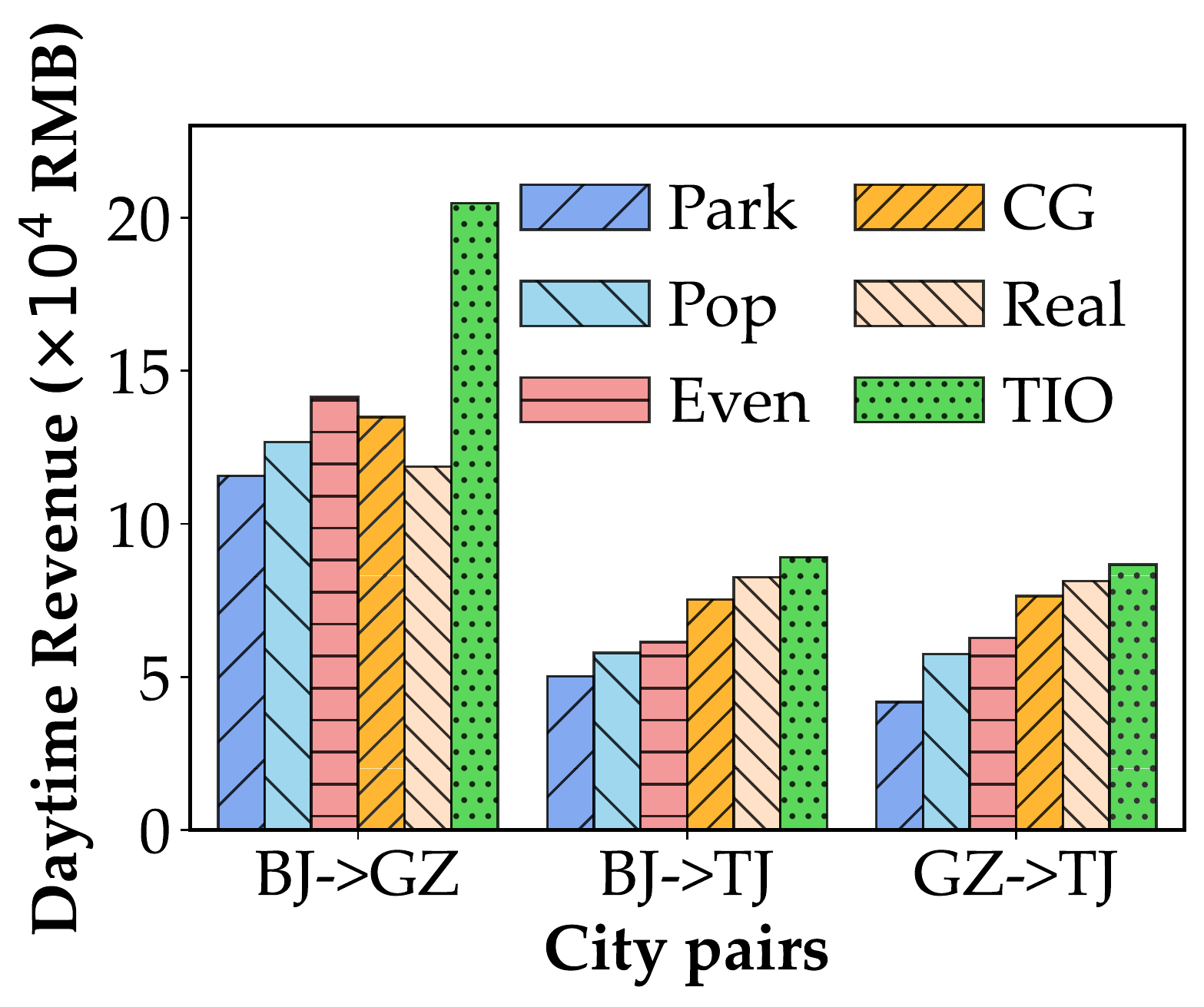}
		\subcaption{Comparison with real plans}
		\label{fig:planning:reality}
	\end{subfigure}%
	\begin{subfigure}{0.5\linewidth}
		\includegraphics[width=\linewidth]{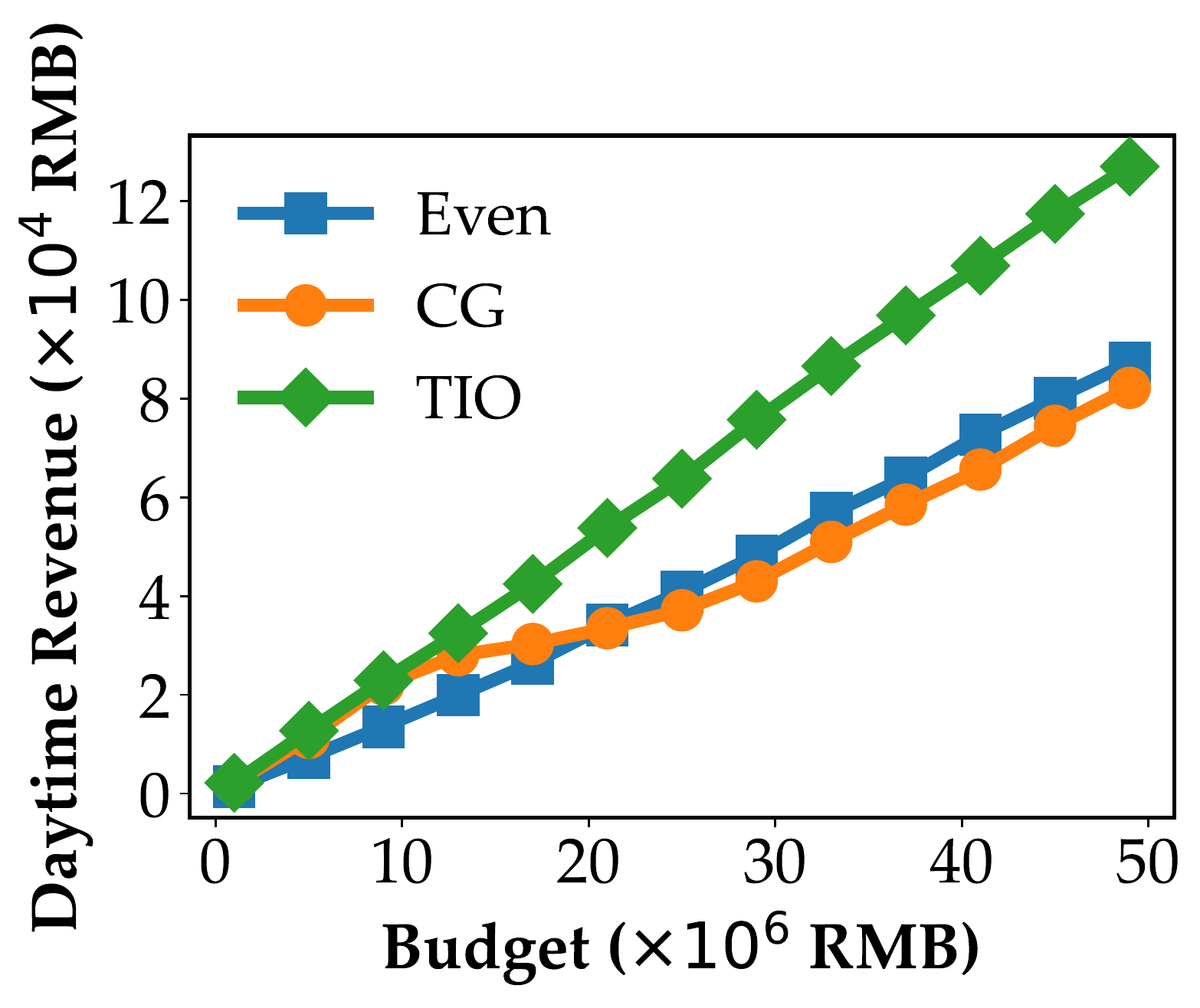}
		\subcaption{BJ $\rightarrow$ GZ}
		\label{fig:evaluation:planning:scalability:budget:b2g}
	\end{subfigure}
	\begin{subfigure}{0.5\linewidth}
		\includegraphics[width=\linewidth]{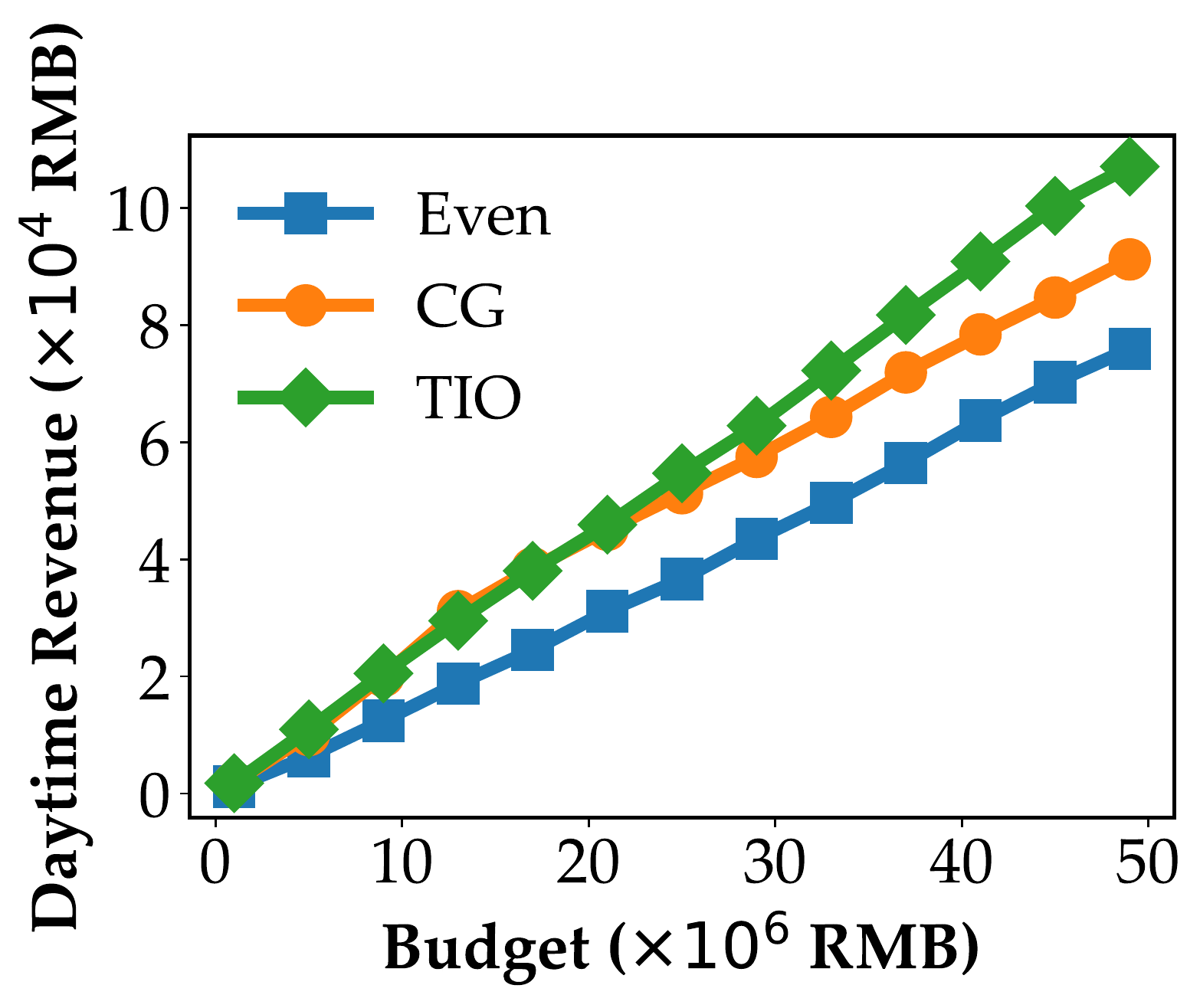}
		\subcaption{BJ $\rightarrow$ TJ}
		\label{fig:evaluation:planning:scalability:budget:b2t}
	\end{subfigure}%
	\begin{subfigure}{0.5\linewidth}
		\includegraphics[width=\linewidth]{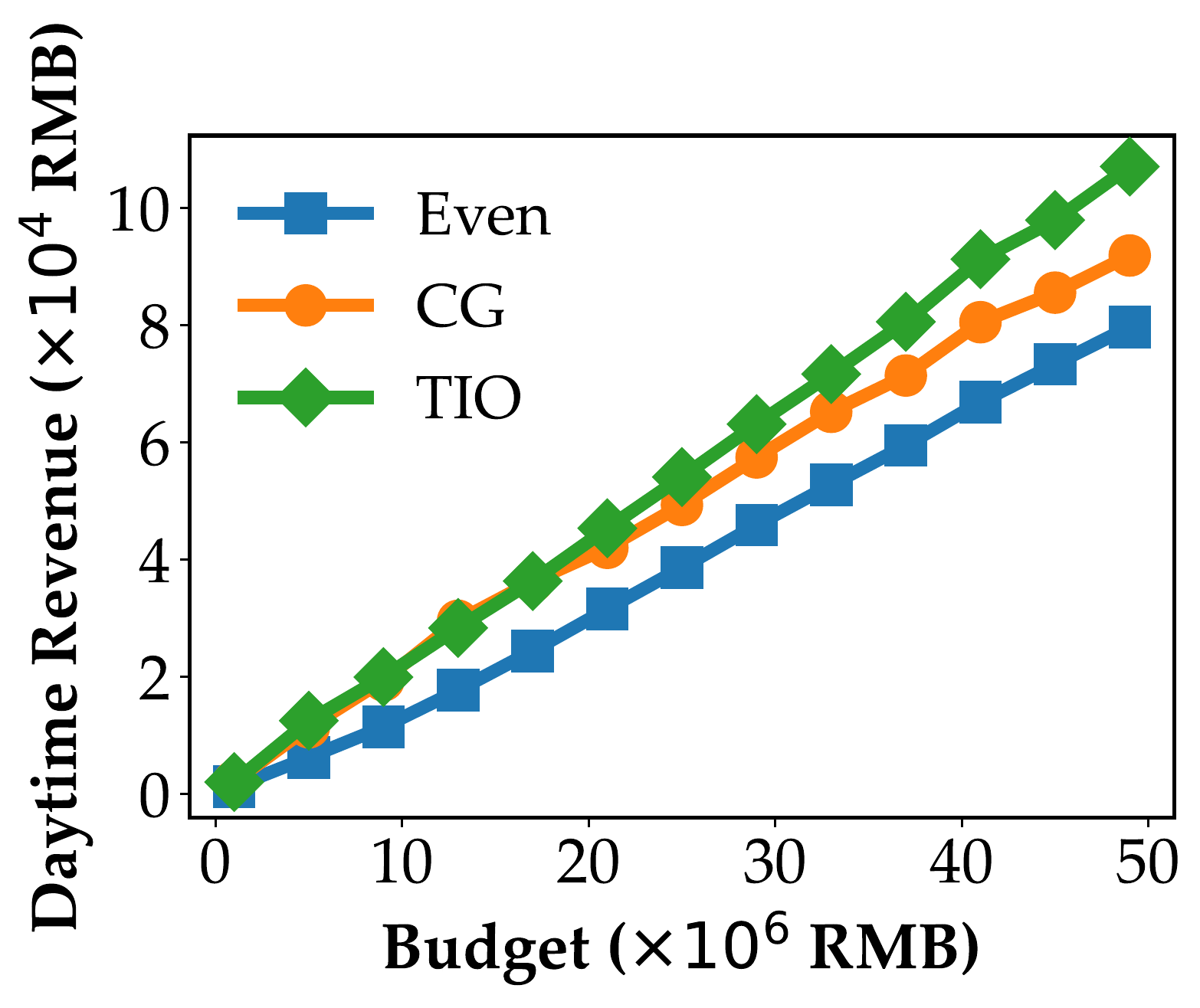}
		\subcaption{GZ $\rightarrow$ TJ}
		\label{fig:evaluation:planning:scalability:budget:g2t}
	\end{subfigure}
	\caption{Comparison results for charger planning}
	\label{fig:evaluation:planning:scale}
	\vspace{-10px}
\end{figure}

\noindent\textbf{Performance Comparisons with Real Plans}.
As shown in Fig. \ref{fig:planning:reality}, \textit{TIO} outperforms other baselines and achieves 72.5\%, 7.9\% and 6.7\% revenue increment comparing with the real plans in BJ$\rightarrow$GZ, BJ$\rightarrow$TJ and GZ$\rightarrow$TJ cases, respectively. The increment is smaller in Tianjin than that in Guangzhou, because 1) the lately deployed plan in Tianjin has a higher average utilization rate (45\%) than the early deployed plan in Guangzhou (31\%), and 2) the deployment scale and the used budget are smaller than that in Guangzhou. It implies that our \textit{TIO} can intelligently find a better charger plan with higher revenue, while avoiding poor charger plans, by efficiently utilizing the budget on chargers with higher demands. It also implies that \textit{TIO} can learn more useful knowledge from the data in other cities than the human experience.
Second, we observe that, \textit{CG} performs better than \textit{Even} in Tianjin but the results are just the opposite in Guangzhou. It is because that Guangzhou has more slow chargers, which guides \textit{CG} to spend more budget on satisfying the demands of slow chargers; while the slow chargers have a lower cost-benefit ratio in reality.
By contrast, our \textit{TIO} can avoid this drawback. 
Finally, \textit{Park} and \textit{Pop} get the worst results, which confirms that the implicit data are inappropriate to represent EV charging demands for charger planning.

\begin{figure}[t]
	\includegraphics[width=1.02\linewidth]{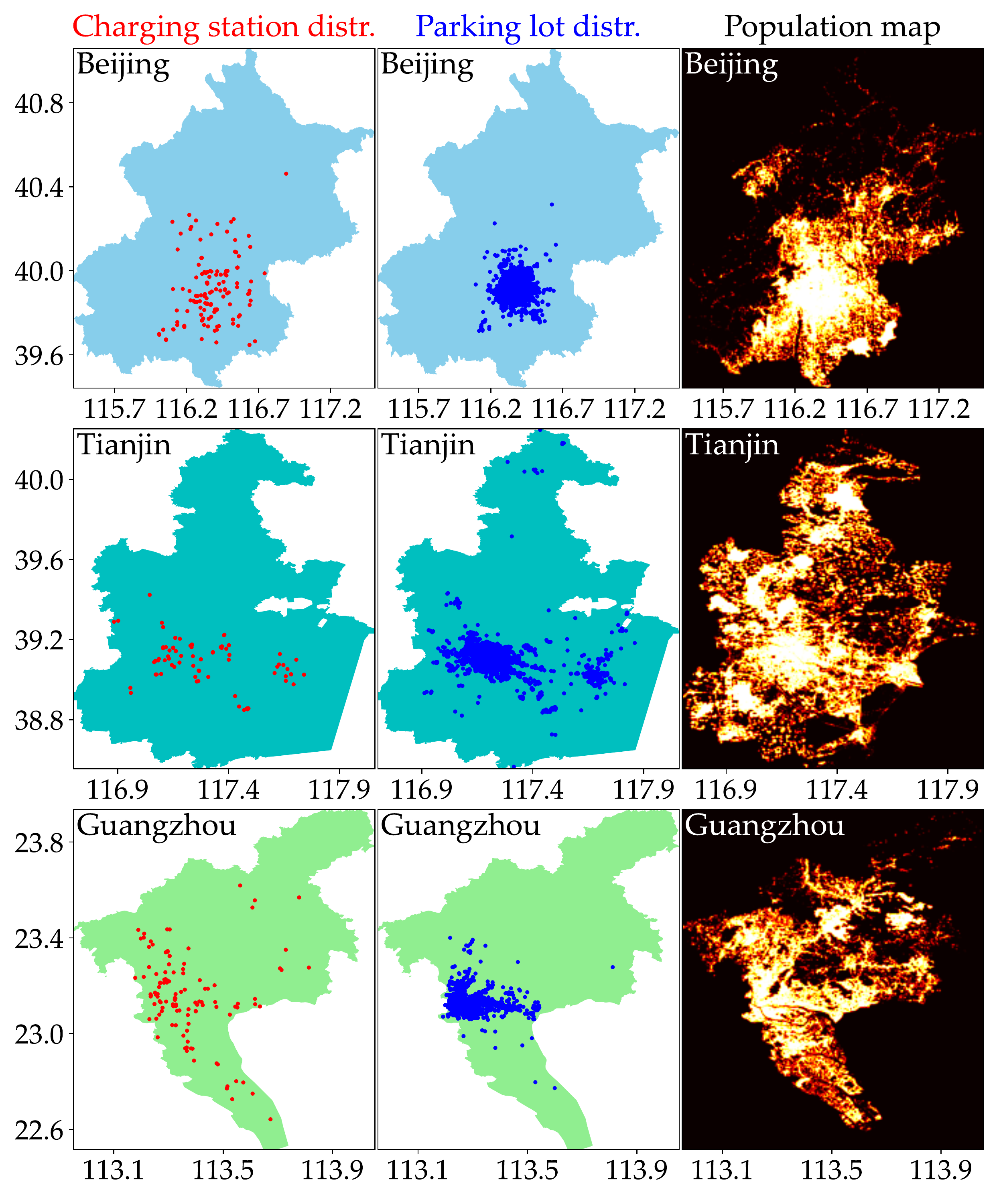}
	\caption{Distributions of charging stations and two proxies for charging demands}
	\label{fig:demand_proxy}
	\vspace{-10px}
\end{figure}

\noindent\textbf{Analysis on Charging Demand Proxies}.
To inspect the representativeness of alternative proxies for charging demands, we compare the distributions of charging stations and two proxies as shown in Fig. \ref{fig:demand_proxy}. We observe that: (1) Parking lots have different spatial distribution with charging stations. In fact, a city has a large number of parking plots belonging to different operators, so we could only obtain parking sessions data in a biased manner (e.g., mainly distributed in the urban centers in Fig. \ref{fig:demand_proxy}). Even if we could collect the comprehensive parking sessions data, they may still have very different spatial-temporal patterns because chargers are not so ubiquitous as parking lots particularly when the EV market share is still small. (2) The population distribution is wider than that of charging stations, which will bring errors to the estimation method where population is allocated to the nearest charging station. Compared with the general population distribution, early EV adopters are disproportionately younger, male, more educated, and more sensitive to environmental concerns \cite{chen2013locating}. In summary, such implicit data have so different distributions with charging demands in nature that they are inappropriate to represent EV charging demands for charger planning.

\noindent\textbf{Performance Comparisons with Varied Budgets}.
From Figs. \ref{fig:evaluation:planning:scalability:budget:b2g}, \ref{fig:evaluation:planning:scalability:budget:b2t} and \ref{fig:evaluation:planning:scalability:budget:g2t}, we observe 1) the revenues achieved by all the algorithms increase with the budget; 2) our \textit{TIO} achieves the highest revenue under all the cases, and its advantage is more obvious as the budget increases, indicating that \textit{TIO} is able to utilize the budget more efficiently on those chargers with higher demands;
3) \textit{CG} performs better than \textit{Even} in Tianjin but the results are just the opposite in Guangzhou when there is a big budget ($>$ \textyen20 million), due to the same reasons as explained before. 
In addition, we want to emphasize that, our \textit{TIO} applies to various city-pair cases, while \textit{CG} is unpractical in a new city due to lack of historical demand data before the actual deployment.

\begin{figure}[t]
	\begin{subfigure}{0.5\linewidth}
	    \captionsetup{width=0.9\linewidth}
		\includegraphics[width=\linewidth]{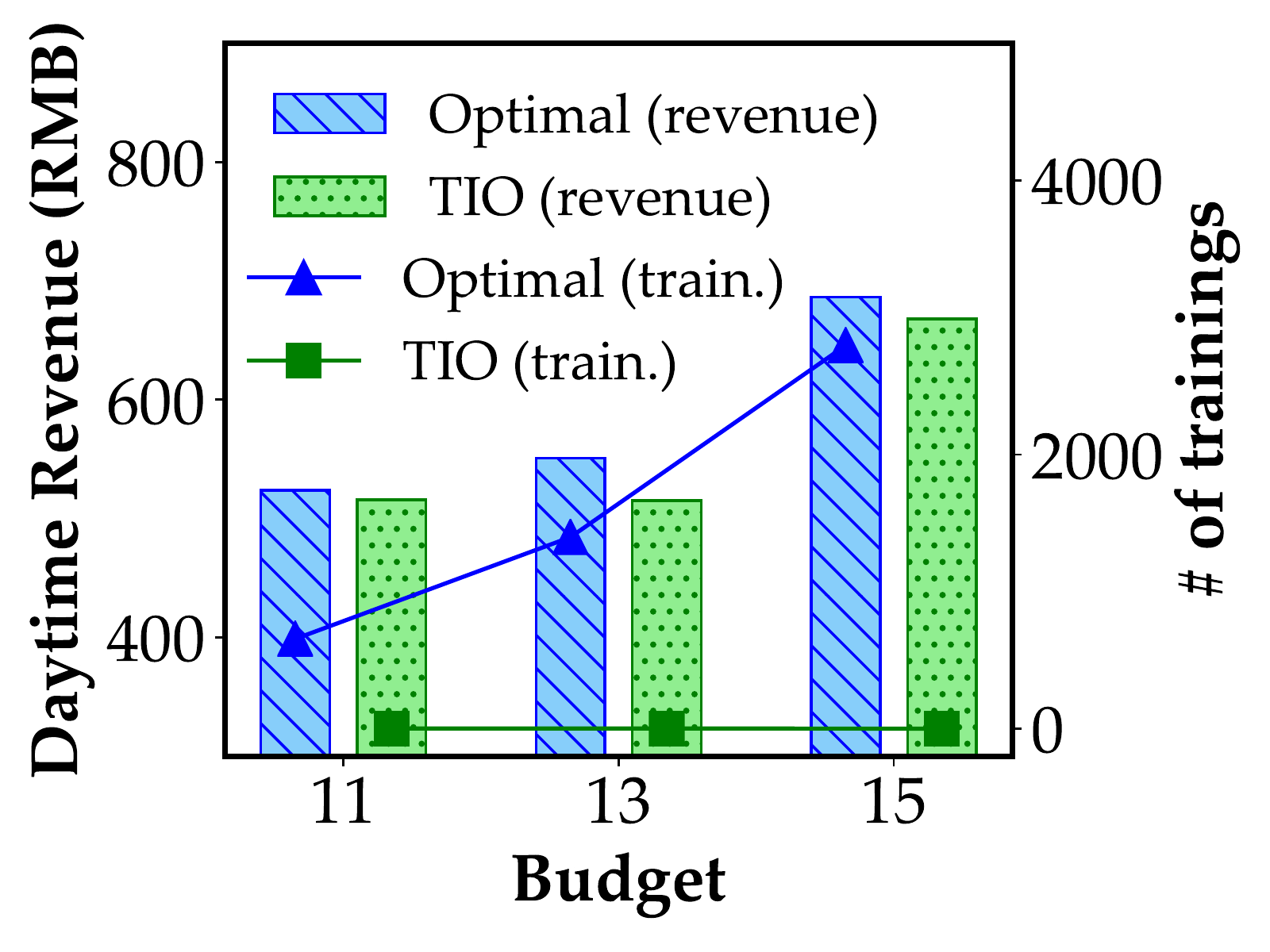}
		\subcaption{Results with varied $B$ ($|C_{TC}|=4)$}
		\label{fig:planning:optimal:budget}
	\end{subfigure}%
	\begin{subfigure}{0.5\linewidth}
	    \captionsetup{width=0.9\linewidth}
		\includegraphics[width=\linewidth]{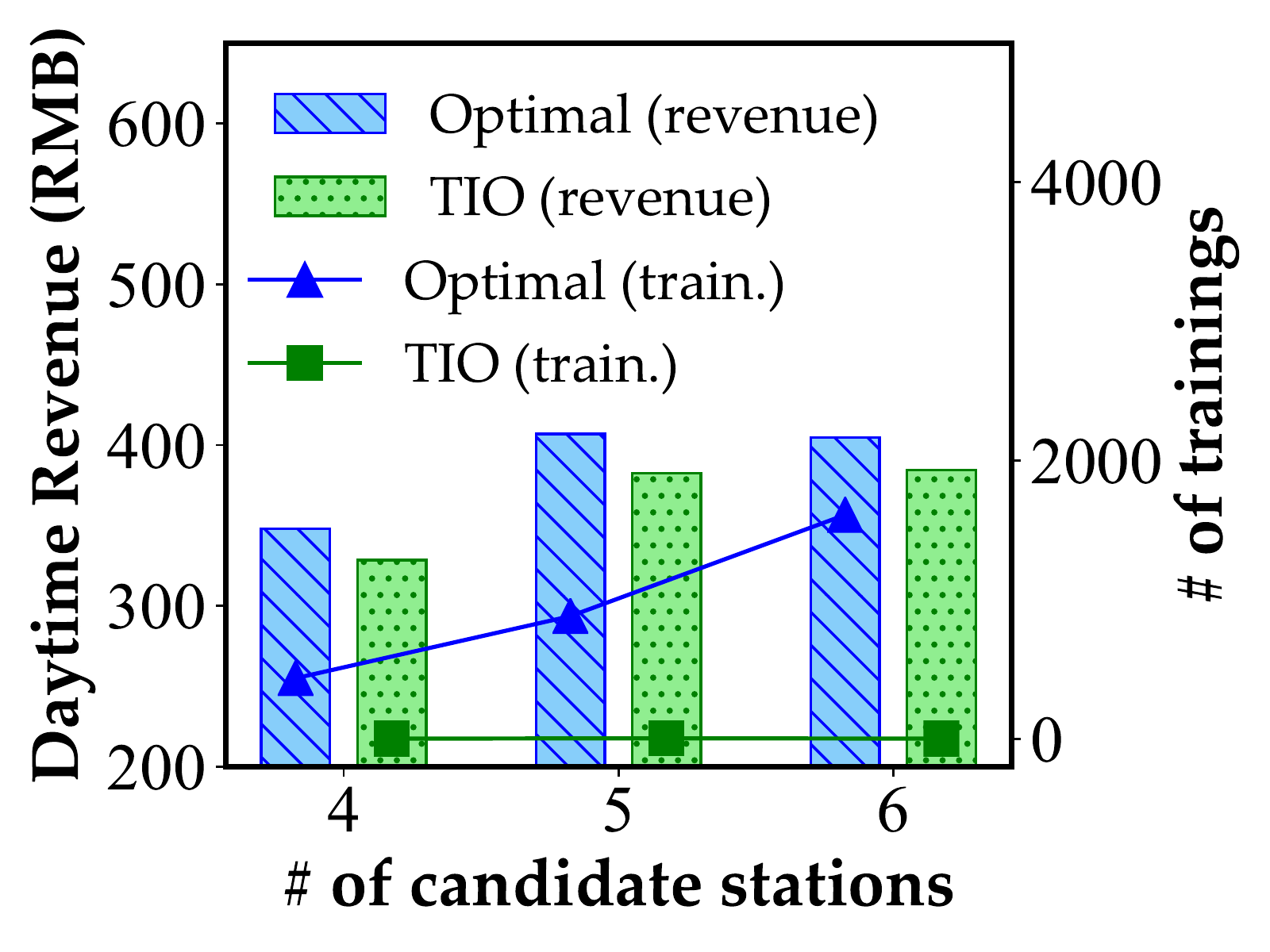}
		\subcaption{Results with varied $|C_{TC}|$ ($B=12$)}
		\label{fig:planning:optimal:candidate_station_number}
	\end{subfigure}
	\caption{Comparison with the optimal solution (BJ$\rightarrow$TJ)}
	\label{fig:evaluation:planning:optimal}
	\vspace{-15px}
\end{figure}

\noindent\textbf{Comparison with the Optimal Solution}. Since \textit{TIO} is a heuristic solution, we are interested to know its effectiveness and efficiency compared with the optimal solution.
Nevertheless, with the large search space analyzed in Sect. \ref{subsec:chllanges}, the optimal algorithm is unpractical. Thus, we select at most 6 candidate stations in the central area of Tianjin and small budgets ($\leq$15, meanwhile we proportionally set $e_i^S=2\text{ and }e_i^F=3$) for experiments.
Fig. \ref{fig:planning:optimal:budget} compares the results with varied $B$ when $|C_{TC}|=4$, and Fig. \ref{fig:planning:optimal:candidate_station_number} compares the results with varied $|C_{TC}|$ when $B=12$.
We observe that the revenue achieved by \textit{TIO} is very close to the optimal solution.
However, the required number of trainings by the optimal solution dramatically increases with $B$ and $|C_{TC}|$, up to 2796 when $B=15$ and $|C_{TC}|=4$.
By contrast, our \textit{TIO} only needs at most 4 trainings.

\noindent\textbf{Time Efficiency.} We evaluate the time cost with the real budgets. The \textit{TIO} at most consumes 2.7 hours with 9 iterations in BJ$\rightarrow$GZ; 2.14 hours with 8 iterations in BJ$\rightarrow$TJ; 3.34 hours with 13 iterations in GZ$\rightarrow$TJ, which is completely acceptable in reality. 
\section{Discussion}
In spite of many merits for our \textit{SPAP}, some possible limitations are still worthy of discussions or further research in the future, summarized as follows:

\noindent \textbf{Cross-city Prediction}. 
Demand prediction is a challenging task in a new city where no explicit historical data is available.
Although we have designed the \textit{AST-CDAN} model for addressing the domain shift problem, the performance may be degraded specially when source city and target city have quite different characteristics (e.g., city scale, development level and strategy), or source city has low demand diversity.
We plan to tackle this challenge by learning from more source cities to enhance the generalization ability of the transfer learning model.

\noindent \textbf{Cross-city Planning}.
The \textit{TIO} algorithm adopts a heuristic idea without strict guarantee on the optimality.
Nevertheless, it is still promising because 1) the \textit{TIO} algorithm is at least better than any naive method (e.g., ``\textit{even}''), by taking the naive method as the initial plan and iteratively optimizing it;
2) it consistently outperforms various existing charger planning methods; 
and 3) the achieved performance is very close to the optimal solution, which has been verified by extensive experiments.
In the future, other solutions with a solid theoretical guarantee are worth investigating, while this work can provide important insights as a starting point.

\noindent \textbf{Long-term planning}. Given that the EV market is still young, one would need much more data before coming to conclusion on how to construct the whole charging station network.
It could be wise to place chargers in phases, which is also consistent with the gradual development mode commonly adopted by charging station operators in reality.
As one collects data and learns more, the chargers could be placed in other locations in multiple phases or use dynamic pricing as a complement.
Guided by that, this work is committed to solving the cold-start problem in the first phase.
Dynamic urban macro factors, e.g. newly built infrastructure in the future, will influence charging demands of the related regions, 
which should be considered in long-term construction.
Nevertheless, static urban factors used in this work are sufficient for planning in the first phase, whose target is to find a subset of candidate locations 
with the highest utility in the current phase.
\section{Related Work}
\subsection{Charger Demand Modeling and Prediction}
The related work on charger demand modeling and prediction can be classified into into two categories based on the used data type.

\noindent\textbf{Implicit Data.}
A traditional way is to infer charging demands by leveraging relevant \textit{implicit} information \cite{chen2013locating,xiong2017optimal}. 
Chen et al. \cite{chen2013locating} use the parking demand as proxy to estimate the charging demand. 
Xiong et al. \cite{xiong2017optimal} use the population distribution to estimate the charging demand.
Liu et al. \cite{liu2019social} assume that the charging demand is proportional to the traffic flow.
Liu et al. \cite{liu2016optimal} leverage the refueling demand to define the charging demand. 
Unfortunately, such \textit{indirect} method is error-prone due to the dissimilar nature of different spatio-temporal mobility patterns. 
In other words, the implicit data has intrinsic defects for charging demand prediction.

\noindent\textbf{Explicit Data.}
Recently, the advanced data acquisition technologies enable us to collect \textit{explicit} data about charging events of EVs, which helps to charger planning \cite{du2018demand, wang2018bcharge, gopalakrishnan2016demand, li2015growing, luo2020d3p}.
Li et al. \cite{li2015growing} extract charging demands from the seeking sub-trajectories of EV taxis. 
Du et al. \cite{du2018demand} use the return records of an EV sharing platform as the charging demand.
These data sources are only limited to commercial EVs rather than private EVs. 
For the general charging stations except for those that are used exclusively for commercial EVs, the only available \textit{explicit} data are their charger transaction records \cite{gopalakrishnan2016demand}, whereas it is impossible in a new city.

\subsection{Charger Planning}
Existing work on charger planning mainly falls into two categories. In the first category, all charger demands are required to be fulfilled to maximize the social welfare \cite{jia2012optimal, li2015growing, liu2012optimal, xiong2017optimal}. For example, Li et al. \cite{li2015growing} minimize the average seeking and waiting time of all charging demands based on taxi trajectory data. 
The second category takes charging demand as objectives \cite{frade2011optimal, du2018demand, lam2014electric}. 
For example, Du et al. \cite{du2018demand} use both coverage and charging demand as the optimization objective. Our work takes charging demands as part of the objective. However, charging demands are affected by both the station profile and nearby stations, which is ignored by the existing work. Moreover, we are the first to conduct simultaneous demand prediction and planning in a new city.

\subsection{Urban Transfer Learning}

Recently, urban transfer learning \cite{wei2016transfer, katranji2016mobility, wang2018smart, guo2018citytransfer, DBLP:conf/ijcai/WangGMLY19, ding2019learning, liu2018will, he2020human} has emerged to be an effective paradigm for solving urban computing problems \cite{zheng2014urban} by applying transfer learning approaches \cite{pan2009survey}.
Wei et al. \cite{wei2016transfer} tackle the label scarcity and data insufficiency problems.
Katranji et al. \cite{katranji2016mobility} predict the Home-to-Work time for families in a new city using survey data of families in both source and target cities.
Guo et al. \cite{guo2018citytransfer} propose a SVD-based transfer method for chain store site recommendation in a new city. 
Wang et al. \cite{DBLP:conf/ijcai/WangGMLY19} propose a cross-city transfer learning method for deep spatio-temporal prediction tasks.
Ding et al. \cite{ding2019learning} solve the problem of cross-city POI recommendation for the travelers by learning from users' visiting behaviors in both hometown and current city.
However, these works need homogeneous data in the target domain, which is not satisfied in our problem, because there is not any historical charging data in the new city.
On the other hand, the domain generalization technique \cite{muandet2013domain} is leveraged to address the problem of label unavailability in the target domain \cite{liu2018will,he2020human}. 
Liu et al. \cite{liu2018will} detect the parking hotspots of the dockless shared bikes in a new city.
He et al. \cite{he2020human} generate mobility data for a new target city.
However, they have different problem settings with us, as we consider both cross-city demand prediction and station planning simultaneously.
\section{Conclusions}
In this paper, we investigate an important problem of planning the charging station network in a new city.
The concept of \textit{simultaneous demand prediction and planning} is first proposed to address the deadlock between charger demand prediction and charger planning.
We prove the NP-hardness of the problem and point out the unacceptable time complexity of a straightforward approach.
We propose the \textit{SPAP} solution by combining discriminative features extracted from multi-source data, an \textit{AST-CDAN} model for knowledge transfer between cities, and a novel \textit{TIO} algorithm for charger planning.
Extensive experiments on real datasets from three cities validate the effectiveness and efficiency of \textit{SPAP}.
Moreover, \textit{SPAP} improves at most 72.5\% revenue compared with the real-world charger deployment.
Our work also has potential implications for other infrastructure planning problems in a new city.

\bibliographystyle{IEEEtran}
\bibliography{references}
\end{document}